\PassOptionsToPackage{table}{xcolor}
\documentclass{article} 
\usepackage{iclr2027_conference,times}

\usepackage{amsmath,amsfonts,bm}

\def\eqref#1{equation~\ref{#1}}

\def\1{\bm{1}}

\DeclareMathAlphabet{\mathsfit}{\encodingdefault}{\sfdefault}{m}{sl}
\SetMathAlphabet{\mathsfit}{bold}{\encodingdefault}{\sfdefault}{bx}{n}

\iclrfinalcopy  
\usepackage{hyperref}
\usepackage{url}

\usepackage{graphicx} 
\usepackage{natbib}  
\usepackage{caption} 
\usepackage{titletoc,booktabs}

\usepackage{microtype}
\usepackage{graphicx}
\usepackage{subcaption}
\usepackage[table]{xcolor}
\usepackage{booktabs}
\usepackage{multirow}
\usepackage{tabularx}
\usepackage{adjustbox}
\usepackage{makecell}
\usepackage{array}

\usepackage{amsmath}
\usepackage{amssymb}

\usepackage{comment}

\usepackage{mathtools}
\usepackage{amsthm}
\usepackage[textsize=tiny]{todonotes}
\usepackage{algorithm}
\usepackage[ruled,vlined,linesnumbered,algo2e]{algorithm2e}
\usepackage{tcolorbox}
\usepackage{marvosym}
\usepackage{pifont}
\usepackage{tikz}
\usepackage{wrapfig}
\usetikzlibrary{calc}
\usepackage{xurl}
\usepackage[capitalize,noabbrev]{cleveref}
\usepackage{algorithm}
\usepackage{algorithmic}
\definecolor{LightRed}{RGB}{255,230,230}
\definecolor{LightBlue}{RGB}{225,240,255}
\theoremstyle{plain}
\newtheorem{theorem}{Theorem}[section]
\newtheorem{proposition}[theorem]{Proposition}
\newtheorem{lemma}[theorem]{Lemma}
\newtheorem{corollary}[theorem]{Corollary}
\theoremstyle{definition}

\theoremstyle{remark}

\definecolor{step1bg}{HTML}{FFF8DC}
\definecolor{step2bg}{HTML}{D4EAE0}
\definecolor{step3bg}{HTML}{D9ECFC}
\usetikzlibrary{calc}

\newtcolorbox{graybox}[1][]{
	colback=gray!2,
	colframe=gray!70,
	boxrule=0.7pt,
	arc=2mm,
	left=0mm, right=0mm, top=0mm, bottom=0mm,
	halign=center,  
	#1
}

\definecolor{LightBlue}{RGB}{135,190,235}

\newtcolorbox{bluebox}[1][]{
	colback=LightBlue!5!white,
	colframe=LightBlue,
	boxrule=0.7pt,
	arc=2mm,
	left=0mm,
	right=0mm,
	top=0mm,
	bottom=0mm,
	halign=center,
	#1
}

\usepackage{newfloat}
\usepackage{listings}
\DeclareCaptionStyle{ruled}{labelfont=normalfont,labelsep=colon,strut=off} 
\floatstyle{ruled}
\newfloat{listing}{tb}{lst}{}
\floatname{listing}{Listing}

\usepackage{booktabs}

\title{MuLoRA: Spectrally Balanced Low-Rank Adaptation for Continual Learning}

\author{ Junkang Liu\textsuperscript{1}\\
	Tianjin University\\
	\texttt{\{junkangliukk\}@gmail.com}
}

\begin{document}

\maketitle
	\pagestyle{plain}
\thispagestyle{plain}

\begin{abstract}
	Low-rank adaptation (LoRA) provides a parameter-efficient
	approach to continual learning, but its nominal rank can
	conceal a loss of effective adaptation capacity.
	We identify \emph{spectral plasticity collapse}:
	during sequential adaptation, update energy becomes
	concentrated in a small subset of singular modes,
	leaving much of the available low-rank space underutilized.
	This exposes a limitation of interference avoidance alone:
	protecting historical representations does not ensure
	that the remaining adaptation capacity is responsive
	to new tasks or effectively utilized.
	To address this problem, we propose \texttt{MuLoRA},
	which jointly controls capacity allocation and utilization.
	First, historical whitening identifies input directions
	with strong current-task response relative to accumulated
	historical response, yielding a task-adaptive basis
	that remains fixed during training.
	Second, approximate polar orthogonalization of momentum
	updates reduces spectral concentration within the
	selected space.
	An orthonormal basis connects these mechanisms by
	transferring the factor-update spectrum exactly to
	the induced weight update.
	We establish a max--min characterization of exact
	subspace selection and derive cumulative spectral
	bounds under controlled cross-step anisotropy.
	Across five class-incremental benchmarks and eight
	incremental settings, \texttt{MuLoRA} achieves the highest
	mean accuracy in 15 of 16 reported metrics.
\end{abstract}


\section{Introduction}
\label{sec:introduction}

Continual learning (CL) requires models to acquire knowledge
from sequential tasks while retaining performance on
previous ones~\cite{EwC,lorapmNIPS25,2021nullspace}.
With the emergence of foundation models, this challenge
increasingly concerns adapting pretrained representations
under limited computation and memory
budgets~\cite{2021clip,cheng2024disentangled}.
Low-Rank Adaptation (LoRA)~\cite{lora} provides an efficient
parameterization by expressing each task-specific weight
update as $\Delta W=BA$ while freezing the backbone
within each task.
However, a fixed adaptation rank does not ensure that
all available directions are effectively used.
An update can retain its algebraic rank while concentrating
most of its energy in only a few singular modes.
This distinction matters in continual adaptation,
where a limited low-rank space must support learning
across successive tasks.

We observe a phenomenon that exposes this limitation:
\emph{\textbf{ spectral plasticity collapse}}.
During continual LoRA adaptation, the learned update
$\Delta W$ can exhibit a highly concentrated spectrum,
leaving many available modes weakly utilized
(Figure~\ref{fig:sp_curve} and Figure~\ref{fig:rank}).
We examine this behavior through four complementary
diagnostics: the singular-value distribution,
effective rank, the nonzero condition number $\kappa_+$,
and stable rank.
The former reveals how update energy is distributed
across modes, while the latter quantifies the effective
number of modes carrying that energy.
Here, collapse refers to the concentration of update energy in a few
singular directions, leading to lower effective and stable ranks
without necessarily reducing the algebraic rank.
Thus, maintaining the nominal LoRA rank does not
guarantee effective use of its adaptation capacity.

Muon \cite{jordan2024muon} offers a natural starting point for addressing
this spectral concentration.
Through approximate polar orthogonalization of momentum,
it reshapes the update spectrum toward balanced nonzero
singular values \cite{bernstein2025derivingmuon}.
This motivates using spectral balancing to improve
update utilization within a fixed-rank adapter.
However, spectral balancing of LoRA factors does not
automatically translate into spectral balancing of
the weight update.
When both factors are trained, their changes induce
$\delta W=\delta B A+B\delta A+\delta B\delta A$.
Even if the factor steps are individually balanced,
multiplication by the factors and interaction between
terms can produce an uneven weight-step spectrum.
The challenge is to make Muon's spectral
control carry  to update of the model.

We address this issue by fixing a row-orthonormal
basis $A$ and updating only $B$ with Muon.
Since $AA^\top=I_r$, each weight step
$\delta W=\delta BA$ has the same nonzero singular
values as $\delta B$.
Muon’s spectral balancing therefore transfers directly
to the weight step.
Likewise, $BA$ preserves the nonzero singular values
of $B$, so the basis introduces no spectral distortion.

Fixing $A$ makes the choice of its subspace important,
because the model can only update within that space.
An orthonormal basis preserves the update spectrum,
but its directions may not be useful for the new task.
Methods such as InfLoRA~\cite{infloraCVPR24} avoid
a retained historical feature subspace to reduce interference in Figure \ref{fig:intro}.
However, shared directions can also support new learning,
and directions with little historical response may still
be unhelpful for the current task.
We therefore select directions with strong current-task
response relative to their historical response,
providing a task-aware space for spectral balancing.

These observations suggest two connected requirements.
The adaptation space should contain directions with
strong current-task response relative to historical
exposure, and its parameterization should preserve
spectral control from factor space to weight space.
Together, they lead to following question:

\begin{bluebox}
	\textit{
		How can Muon's spectral balancing be transferred
		to LoRA weight updates within a subspace suited
		to continual adaptation?
	}
\end{bluebox}

\begin{figure}[t]
	\centering
	\captionsetup[subfigure]{font=small,labelfont=bf}
	\begin{subfigure}[t]{0.24\linewidth}
		\centering
		\includegraphics[width=\linewidth]
		{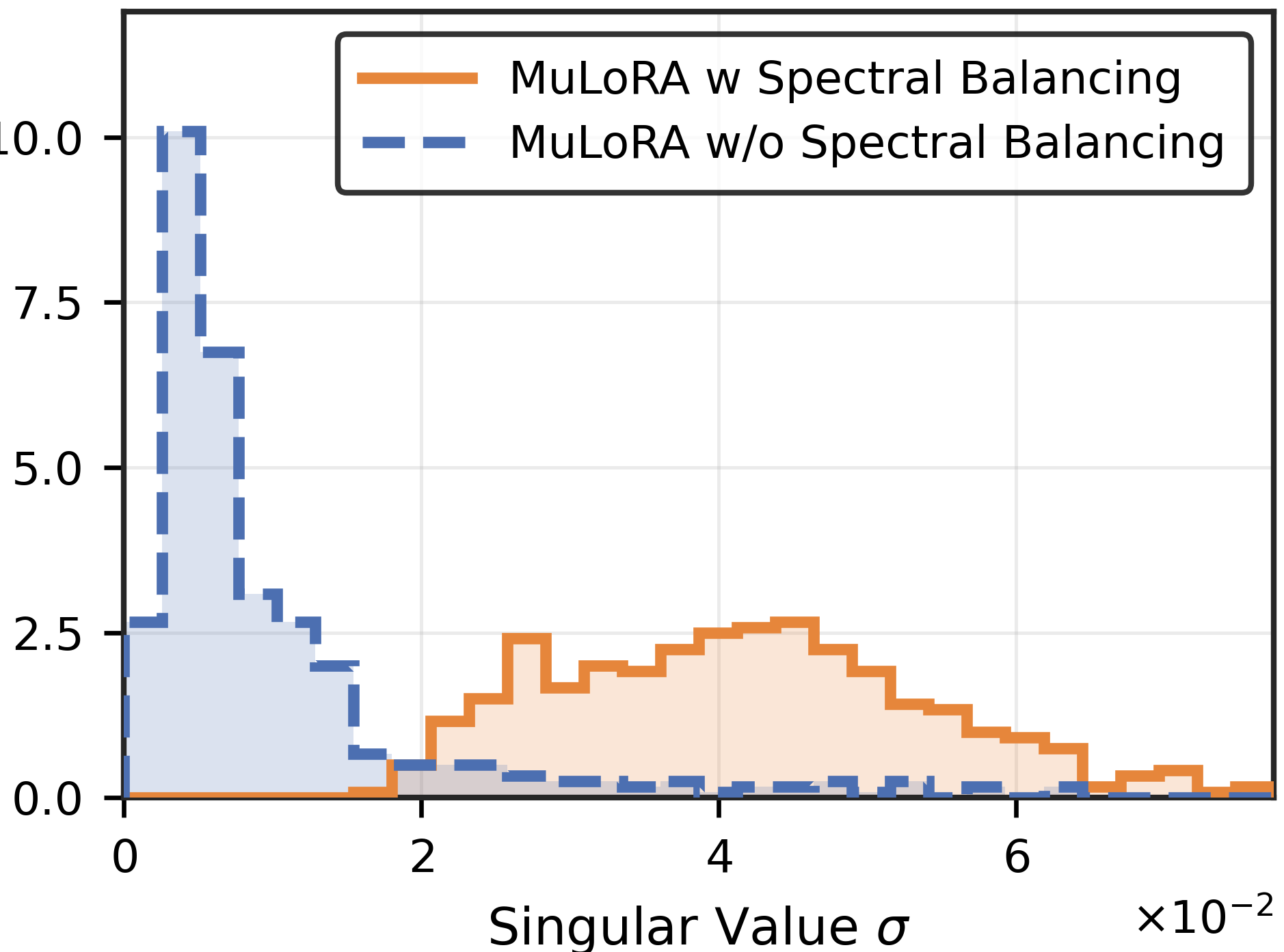}
		\caption{Task 1}
		\label{fig:sp_task1}
	\end{subfigure}
	\hfill
	\begin{subfigure}[t]{0.24\linewidth}
		\centering
		\includegraphics[width=\linewidth]
		{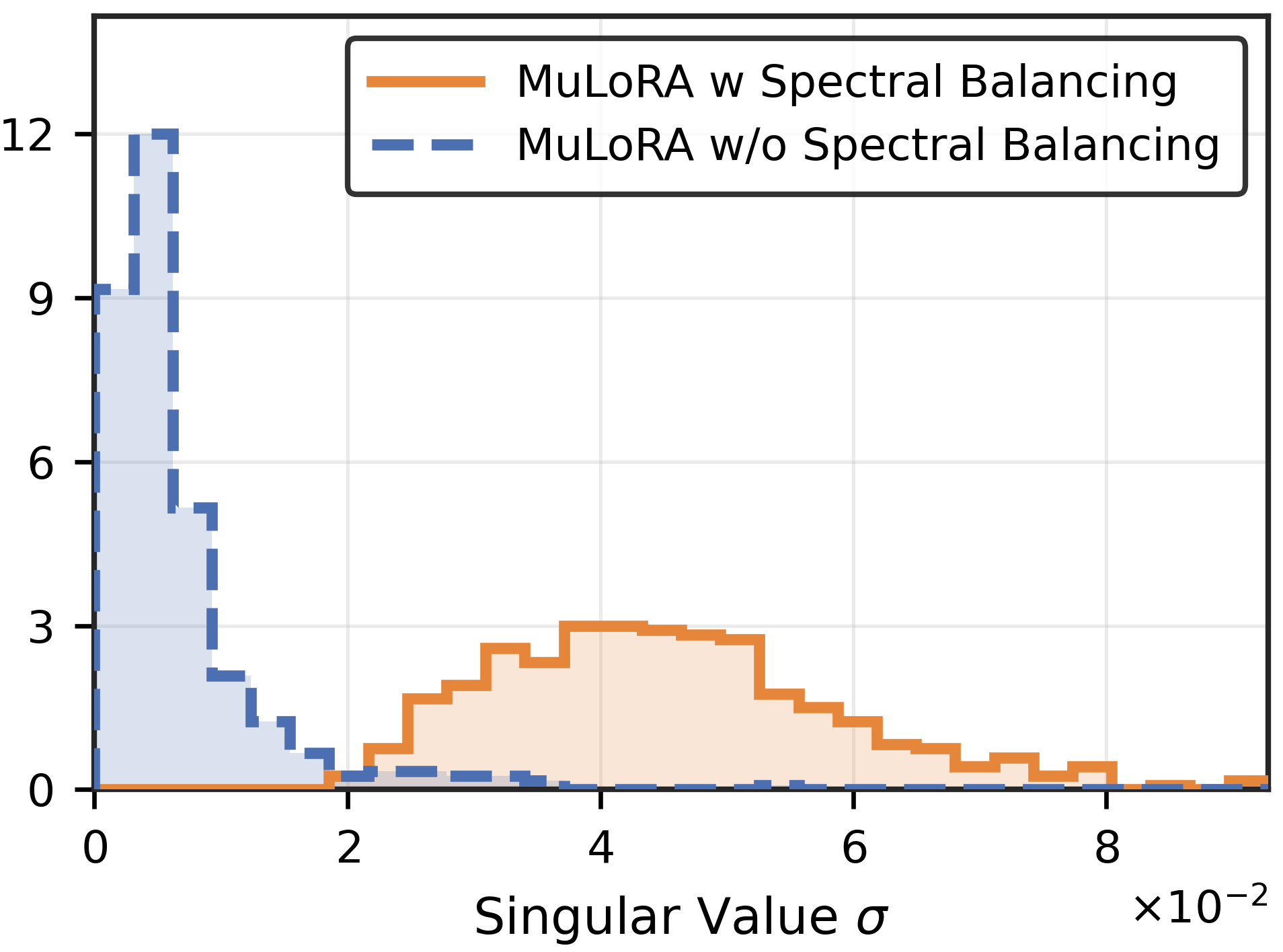}
		\caption{Task 20}
		\label{fig:sp_task20}
	\end{subfigure}
	\hfill
	\begin{subfigure}[t]{0.48\linewidth}
		\centering
		\includegraphics[width=\linewidth]
		{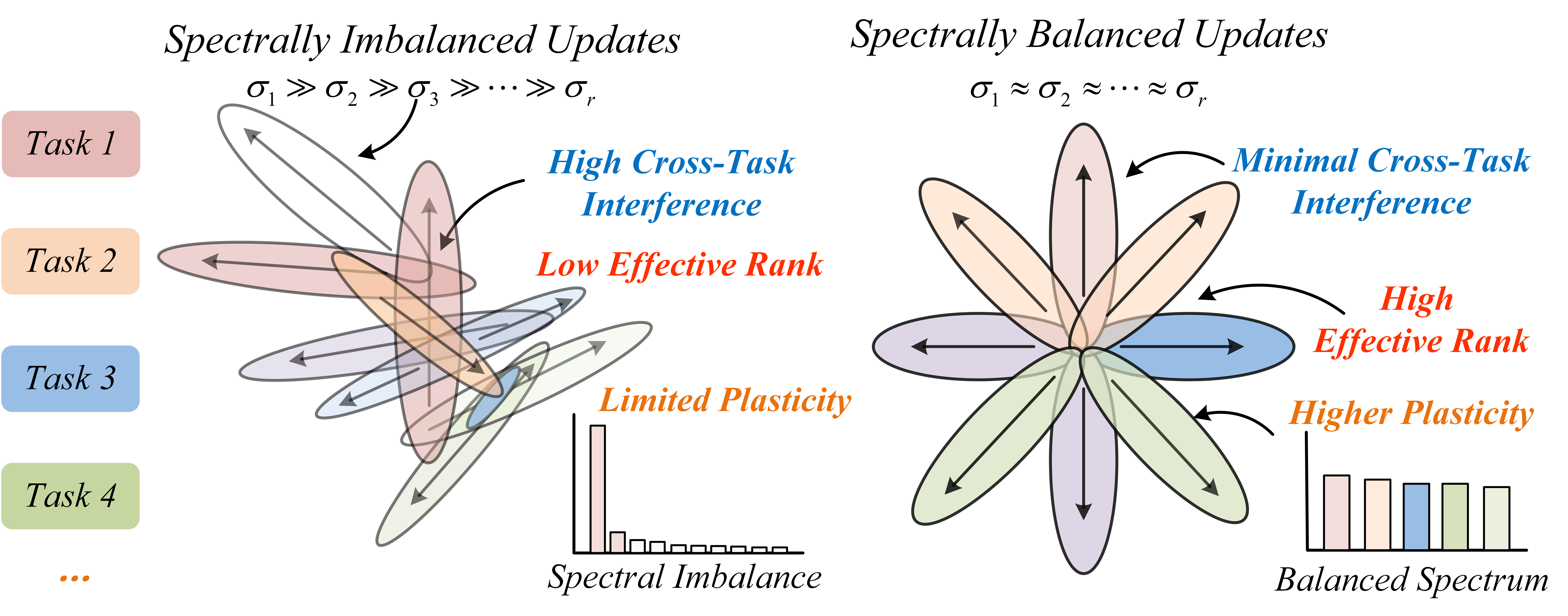}
		\caption{Spectral Capacity Utilization}
		\label{fig:sp_capacity}
	\end{subfigure}
	\vspace{-1.0mm}
	\caption{
		\textbf{Spectral concentration limits effective
			low-rank capacity.}
		(a,b) Singular-value distributions (Frequency vs.\ $\sigma$) of learned
		LoRA updates $\Delta W$ at Tasks 1 and 20, with and without
		spectral balancing.
		More balanced spectra distribute update energy
		across a larger number of modes.
		(c) Conceptual illustration of concentrated
		versus balanced spectra and their relationship
		to effective rank.
		A fixed nominal rank can correspond to markedly
		different levels of spectral utilization.
	}
	\vspace{-4.0mm}
	\label{fig:sp_curve}
\end{figure}

We propose \underline{Mu}on
\underline{Lo}w-\underline{R}ank
\underline{A}daptation (\texttt{MuLoRA}),
which connects Relative-Response Spectral Allocation (RRS)
with Subspace Spectral Balancing (SSB).
For each task, \texttt{MuLoRA} constructs a
row-orthonormal input basis $A_t$, keeps it fixed,
and applies Muon-style updates to the trainable
up-projection $B_t$.
The central design principle is to select
task-responsive directions using historical information
and preserve the update spectrum through the
LoRA parameterization.

For subspace selection, we maximize current-task
feature response under a regularized historical-response
budget.
Historical whitening converts this objective into
a standard spectral selection problem.
The selected directions are mapped back to the input
space and orthonormalized to construct $A_t$.
Historical information thus provides a graded selection
metric: previously active directions remain eligible
when their relative current-task response is strong.
With exact spectral selection, the resulting subspace
maximizes the worst-direction relative response among
all subspaces of the same dimension.
This provides a subspace-wide characterization of
the directions available for adaptation.

For spectral balancing, we apply approximate polar
orthogonalization to momentum updates of $B_t$.
The ideal transformation equalizes all nonzero singular
values, maximizing spectral participation on the
active update support.
Since $A_tA_t^\top=I_r$, the resulting factor step
and its induced weight step have identical nonzero
singular values and effective rank.
This connects the two components:
RRS determines where adaptation occurs and supplies
an orthonormal basis, while SSB controls how energy
is distributed in each weight step.

Our analysis distinguishes this stepwise control
from the behavior of accumulated updates.
Balanced steps need not produce a balanced sum,
because successive steps can reinforce or cancel
different directions.
We derive cumulative condition-number and spectral
participation bounds under controlled cross-step
anisotropy, identifying conditions under which
stepwise balancing extends to accumulated adaptation.
Empirically, we examine singular-value distributions
and participation-based effective rank alongside
continual-learning accuracy.
Across five benchmarks and eight incremental settings,
\texttt{MuLoRA} achieves the highest mean accuracy
in 15 of 16 metrics.
Ablations support jointly selecting task-responsive
subspaces and balancing updates within them.

\textbf{Our main contributions} can be summarized as follows:\\
\textbf{$\bullet$}
We identify \emph{spectral plasticity collapse} in continual
LoRA adaptation: update energy concentrates in a few
singular modes, leaving the effective rank well below
the nominal adapter rank.
We further highlight a challenge in applying Muon:
balancing the two factor steps separately does not
guarantee a spectral balanced weight step.
\\
\textbf{$\bullet$}
We propose \texttt{MuLoRA}, a history-aware low-rank
parameterization that transfers Muon's spectral balancing
directly to weight space.
It fixes an orthonormal input basis selected by
current-task response relative to historical exposure
and applies Muon updates to the trainable factor.
This design preserves the nonzero singular values
of each factor step in the induced weight step,
enabling spectral control within a task-responsive
adaptation space.
\\
\textbf{$\bullet$}
We show that exact subspace selection maximizes the
worst-direction relative response and derive cumulative
spectral bounds under controlled cross-step anisotropy.
Across five benchmarks and eight incremental settings,
\texttt{MuLoRA} achieves the highest mean accuracy
in 15 of 16 reported metrics, with ablations supporting
the complementary benefits of RRS and SSB.

\begin{figure*}[bt]
	\centering
	\includegraphics[width=0.98\textwidth]{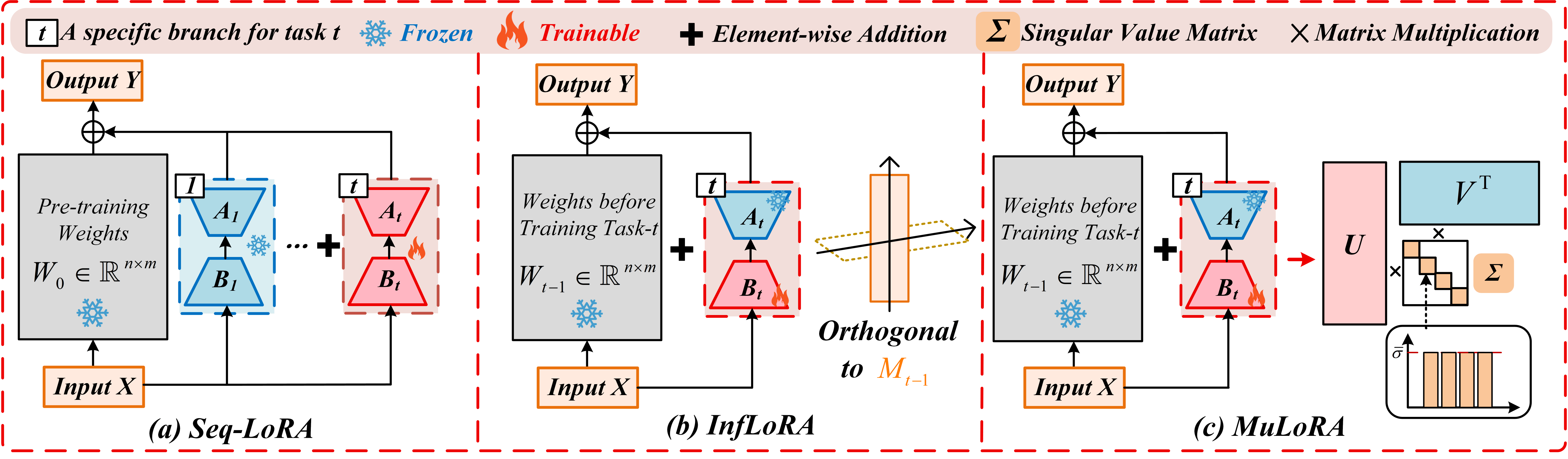}
	\vspace{-1.0mm}
	\caption{\textbf{Comparison of different LoRA-based CL paradigms.}
		(a) Seq-LoRA employs independent adapters, 
		(b) InfLoRA preserve previous knowledge by restricting updates to isolated directions, 
		(c) \texttt{MuLoRA} fixes a task-responsive orthonormal
		basis and applies Muon to the trainable factor,
		transferring spectral balancing directly to weight steps. 
	}\label{fig:intro}
	\vspace{-2.0mm}
\end{figure*}

\section{Related Works}

\textbf{PEFT-based Continual Learning.}
Parameter-efficient fine-tuning (PEFT) enables continual adaptation of pretrained models through lightweight trainable modules.
Prompt-based methods~\cite{codaCVPR23, cpromptCVPR24} learn and select prompts to accommodate sequential tasks, while PGP~\cite{pgpICLR24} and VPT-NSP~\cite{vptnspNIPS-24} derive projection constraints for mitigating interference.
Adapter-based methods~\cite{DIA-cvpr25, ssaitCVPR24, he2025ckaa, wang2026ekpc} preserve and integrate task knowledge through lightweight modules, representation alignment, or feature compensation.
LoRA-based approaches further structure adaptation within low-rank parameter spaces.
SD-LoRA~\cite{sdloraICLR25} decouples direction and magnitude learning; O-LoRA~\cite{olora-acl2023} and InfLoRA~\cite{infloraCVPR24} restrict overlap with historical subspaces to reduce interference.
SplitLoRA~\cite{qiu2025splitlora}, BiLoRA~\cite{biloraCVPR25}, and PLAN~\cite{planICCV25} further explore subspace decomposition, orthogonality, and allocation to balance stability and plasticity.
Complementing these approaches, EWC-LoRA~\cite{zheng2026revisiting} regularizes a shared low-rank update using parameter importance estimated in the full weight space.
Closely related to our work, EBLoRA~\cite{gu2026spectral} links
spectral imbalance to forgetting and enforces equal nonzero
singular values through orthonormal factors and a shared scalar
magnitude. \texttt{MuLoRA} addresses the stability--plasticity
trade-off by selecting subspaces with strong current-task response
relative to historical exposure and balancing momentum steps
within them.

\begin{figure*}[!t]
	\centering
	\includegraphics[width=1.00\textwidth]{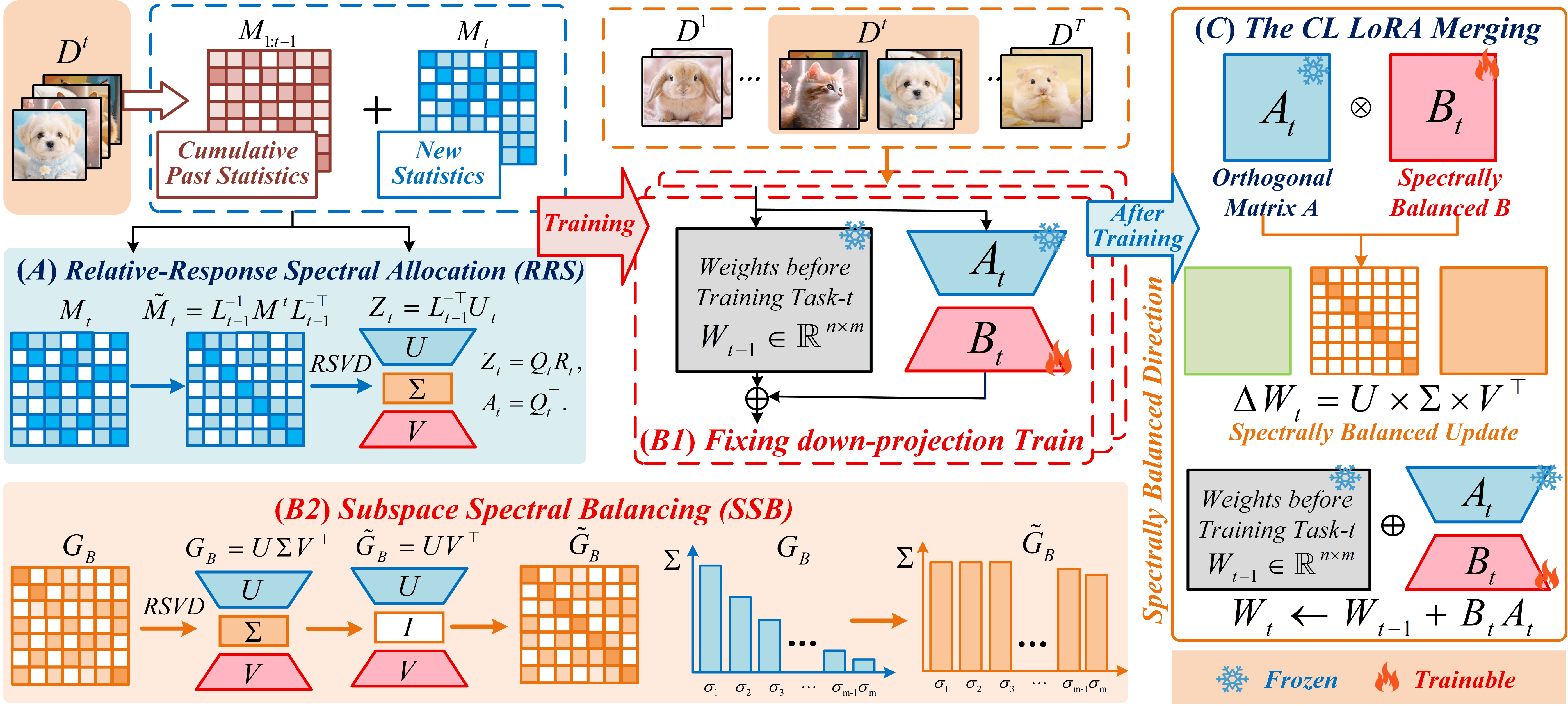}
	\vspace{-3mm}
\caption{
	\textbf{Relative-response subspace selection and spectrally controlled optimization.}
	(A) Historical whitening identifies directions with strong current-task
	response relative to accumulated historical response, from which
	we construct a row-orthonormal basis $A_t$.
	(B1) We freeze the backbone and $A_t$, and train only $B_t$.
	(B2) Approximate polar orthogonalization reshapes the momentum spectrum,
	balancing individual optimization steps within the selected subspace.
	(C) After each task, the learned update is merged into the backbone as
	$W_t=W_{t-1}+B_tA_t$.
}\label{fig: overall_framework}
	\vspace{-1mm}
\end{figure*}

\section{Methodology}

\textbf{Problem Definition.} In CL, there are $T$ sequential tasks with non-overlapping classes. The datasets are defined as $\{\mathcal{D}^1, \cdots, \mathcal{D}^t, \cdots, \mathcal{D}^T$\}. $\mathcal{D}^t = \{x^t_i, y^t_i\}_{i=1}^{N^t}$ denotes the $t$-th task's dataset, where $x^t_i \in \mathbb{R}^{H \times W \times C}$ is the $i$-th input and $y^t_i \in \mathcal{C}^t$ is its class label. $\mathcal{C}^t$ denotes the label space of the $t$-th task, and $\mathcal{C}^t \cap \mathcal{C}^{t'} = \emptyset$ for $t \neq t'$. The objective of CL is to train a model $f(w, \cdot)$ sequentially on $T$ tasks and performs well on all seen classes $\mathcal{C}^1 \cup \cdots \cup \mathcal{C}^T$. We consider the task-agnostic class-incremental learning setting, where task identity is unavailable during inference.

\textbf{Overall framework.}
Figure~\ref{fig: overall_framework} illustrates the proposed framework.
Following existing PEFT-based methods~\cite{l2pCVPR22, ssaitCVPR24},
we adopt a pretrained Vision Transformer (ViT)~\cite{ViT} as the backbone.
For task $t$, each adapted weight matrix
$W_{t-1}\in\mathbb{R}^{n\times m}$ remains frozen and is augmented
with a rank-$r$ update $B_tA_t$, where
$A_t\in\mathbb{R}^{r\times m}$ and $B_t\in\mathbb{R}^{n\times r}$.
Historical whitening identifies a task-adaptive subspace with strong
current-task response relative to accumulated historical response.
We construct a row-orthonormal basis $A_t$ of this subspace,
satisfying $A_tA_t^\top=I_r$, and keep it fixed throughout the task.
Starting from $B_t=0$, we optimize $B_t$ using Muon-style approximate
polar orthogonalization of momentum updates.
The orthonormal basis preserves the nonzero singular values
between each factor update and its induced weight update,
connecting subspace selection with stepwise spectral control.
For an input feature $x\in\mathbb{R}^{m}$, the adapted layer computes
$y=(W_{t-1}+B_tA_t)x\in\mathbb{R}^{n}$.
After training, we merge the learned update into the backbone as
$W_t=W_{t-1}+B_tA_t$ and construct a new basis for the next task.

\subsection{Relative-Response Spectral Allocation}
\vspace{-2mm}
\label{sec:spectral_init}

\textbf{Motivation: shared feature directions also can support new learning.}
Isolation-based methods such as InfLoRA restrict adaptation
to directions orthogonal to a retained historical feature
subspace to mitigate interference.
However, historical and current tasks may rely on shared
directions: excluding them can discard useful adaptation
capacity, while low historical response alone does not
ensure relevance to the incoming task.
We therefore treat historical response as a graded metric
for subspace selection.
A previously active direction remains eligible when its
current-task response is sufficiently strong relative to
its historical exposure.
Our guiding principle is:
\emph{select directions by their current-to-history response,
	then balance updates within the selected subspace.}

\textbf{Relative-Response Spectral Allocation (RRS).}
Let $X_t\in\mathbb{R}^{(n_t n_b)\times m}$ collect the input
features of task $t$, where $n_t$ is the number of samples
and $n_b$ is the number of patch tokens per sample.
Let $M_t=(X_t)^\top X_t$ denote the current-task
second-moment statistics and
$M_{1:t-1}=\sum_{i=1}^{t-1}M_i$ the accumulated
historical statistics.
We define the regularized historical metric as
$H_{t-1}=M_{1:t-1}+\epsilon I$, where $\epsilon>0$.
For a direction $v\neq0$, we measure its relative response by
$
	\mathcal{R}_t(v)
	=
	\frac{v^\top M_t v}{v^\top H_{t-1} v}
	=
	\frac{\|X_t v\|_2^2}
	{\sum_{i<t}\|X_i v\|_2^2+\epsilon\|v\|_2^2}.
	\label{eq:relative_response}
$
As illustrated in Figure~\ref{fig:relative_response},
this criterion favors directions with strong current-task response
relative to their historical response. Directions shared with
previous tasks remain available when they are sufficiently
responsive to the current task.
To select a rank-$r$ subspace, we solve
\vspace{-0mm}
\begin{equation}
	\max_{Z\in\mathbb{R}^{m\times r}}
	\operatorname{tr}(Z^\top M_t Z)
	\quad
	\text{s.t.}\quad Z^\top H_{t-1} Z=I_r.
	\label{eq:relative_subspace}
\end{equation}
The constraint gives each basis direction the same
regularized historical-response budget and makes the
directions orthogonal under the historical metric.
Within this budget, the objective selects a subspace
with the largest total current-task response.
This provides a feature-based criterion for allocating
adaptation capacity, without directly estimating
task loss or forgetting.

\begin{wrapfigure}{r}{0.5\textwidth}
	\centering
	\vspace{-2mm}
	\includegraphics[width=\linewidth]{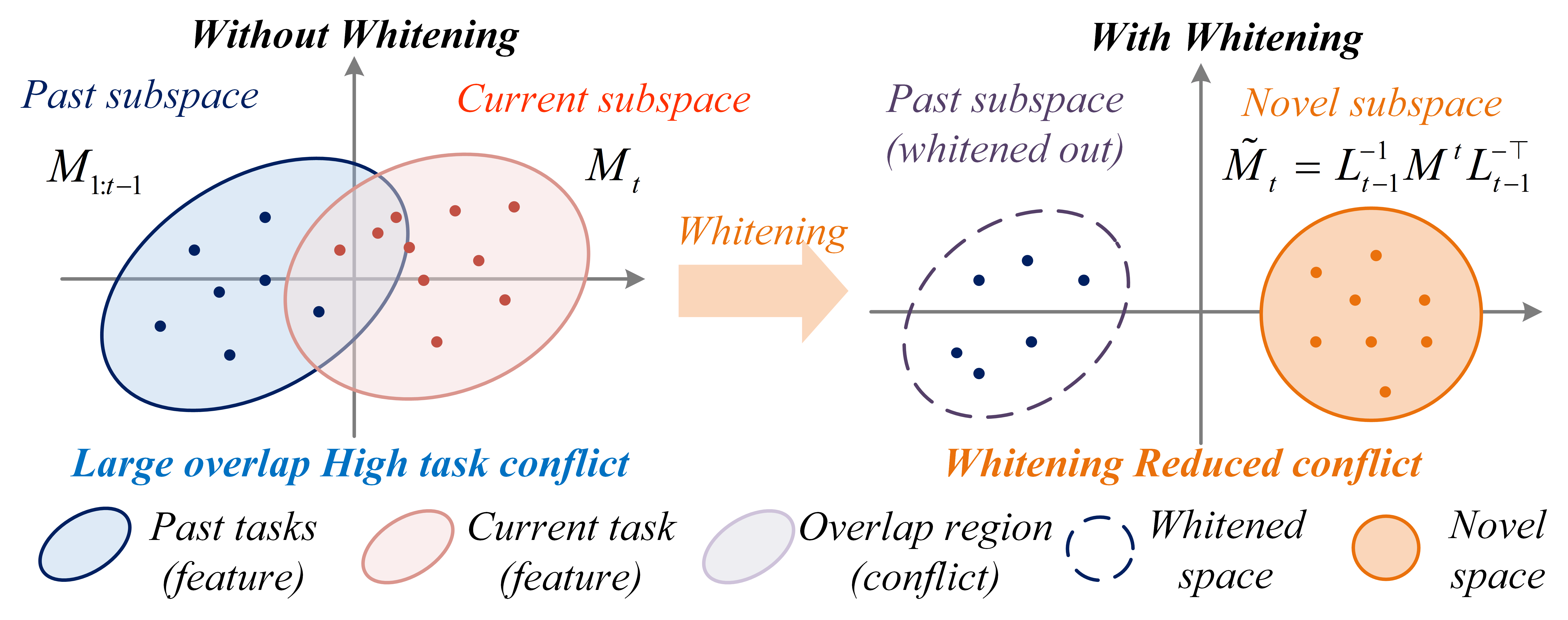}
	\vspace{-2mm}
	\caption{Allocating LoRA capacity using current-task
		response relative to historical feature statistics.}
	\label{fig:relative_response}
	\vspace{-6mm}
\end{wrapfigure}

\textbf{Solving the subspace selection problem with whitening.}
We solve Eq.~(\ref{eq:relative_subspace}) by transforming the
historical-metric constraint into a standard orthogonality constraint.
Let
$
H_{t-1}=L_{t-1}L_{t-1}^{\top}
$
be the Cholesky factorization of $H_{t-1}\succ 0$.
With the change of variables
$
U=L_{t-1}^{\top}Z
$,
or equivalently
$
Z=L_{t-1}^{-\top}U
$,
Eq.~(\ref{eq:relative_subspace}) becomes
\begin{equation*}
	\max_{U^\top U=I_r}
	\operatorname{tr}
	\left(
	U^\top \widetilde{M}_t U
	\right),
	\widetilde{M}_t
	=
	L_{t-1}^{-1}
	M^t
	L_{t-1}^{-\top}.
	\label{eq:whitened_objective}
\end{equation*}
Hence, the optimal whitened subspace is spanned by the leading
$r$ eigenvectors of $\widetilde{M}_t$, which correspond to directions
with the strongest current-task response relative to historical
response.

In practice, we approximate this dominant eigenspace using randomized
SVD~\cite{halko2011finding}:
$
\widetilde U_t,\Sigma_t,\widetilde V_t
=
\texttt{RSVD}
\bigl(
\widetilde{M}_t,r
\bigr),
$
where $\widetilde U_t\in\mathbb{R}^{m\times r}$ contains orthonormal
approximate leading spectral directions in the whitened coordinates.
We then map them back to the original feature space as
$
	Z_t
	=
	L_{t-1}^{-\top}\widetilde U_t.
	\label{eq:map_back_subspace}
$
Because the mapped basis $Z_t$ is generally not Euclidean-orthonormal,
we perform a thin QR factorization
$
	Z_t = Q_tR_t,
	Q_t^\top Q_t=I_r,
	\label{eq:rrs_qr}
$
and define the fixed LoRA down-projection as
$
	A_t=Q_t^\top.
	\label{eq:rrs_basis}
$
Since $R_t$ is nonsingular whenever $Z_t$ has full column rank,
$
\operatorname{range}(Q_t)
=
\operatorname{range}(Z_t),
$
so the QR step changes only the basis representation and preserves the
selected relative-response subspace exactly. Meanwhile,
$
A_tA_t^\top=I_r
$,
which ensures that every factor update $\delta B_t$ and its induced
weight update $\delta B_tA_t$ have identical nonzero singular values.
Thus, RRS preserves the generalized-eigen subspace selected by the
relative-response criterion while providing the Euclidean-orthonormal
parameterization required for exact spectral transfer in the subsequent
SSB optimization.
For the first task, no historical statistics are available and
$H_{0}=\epsilon I$. Therefore,
$\widetilde M_1=\epsilon^{-1}M_1$, and RRS reduces to selecting the
dominant current-task feature subspace.

\subsection{Subspace Spectral Balancing}
\label{sec:spectral_training}
\textbf{Motivation: spectral collapse limits capacity utilization.}
Even within a task-responsive subspace, adaptation updates
can concentrate most of their energy in a few singular modes,
leaving the remaining directions weakly utilized.
We refer to this concentration as \emph{spectral collapse}:
the effective rank of the update can be much lower than
the nominal LoRA rank, even without a loss of algebraic rank.
Subspace selection alone therefore does not ensure that
the available capacity is effectively used.
To address this issue, we apply Muon-style spectral balancing
to the optimization step, reducing the dominance of a few
modes within the selected subspace.

\textbf{Connecting factor steps to weight steps.}
For task $t$, we parameterize the adapted weight as
$
W_{t,k}=W_{t-1}+B_{t,k}A_t,
A_tA_t^\top=I_r,
\label{eq:fixed_basis_adaptation}
$
where $k$ indexes optimization steps.
We freeze $A_t$ and train $B_{t,k}$ from $B_{t,0}=0$.
Unlike updating both factors, this yields a weight step
$S_k=\delta B_kA_t$, where
$\delta B_k=B_{t,k+1}-B_{t,k}$, satisfying
$
S_kS_k^\top
=
\delta B_kA_tA_t^\top\delta B_k^\top
=
\delta B_k\delta B_k^\top.
\label{eq:step_spectrum_transfer}
$
Thus, the factor step and the weight step have identical
nonzero singular values.
Controlling the spectrum of $\delta B_k$ therefore
controls the spectrum of the actual weight step,
without distortion from $A_t$.

\textbf{Subspace Spectral Balancing (SSB).}
Let $G_k\in\mathbb{R}^{n\times r}$ denote the momentum
matrix used to update $B_{t,k}$.
For $G_k\neq0$, write its compact SVD as
$
	G_k=U_k\Sigma_kV_k^\top,
	\Sigma_k=\operatorname{diag}(\sigma_1,\ldots,\sigma_q),
	\label{eq:update_svd}
$
where $q=\operatorname{rank}(G_k)$ and $\sigma_j>0$.
The ideal polar transformation preserves the active
singular directions and sets their singular values
to one:
\begin{equation}
	D_k=\mathcal{O}(G_k)=U_kV_k^\top,\:
	B_{t,k+1}=B_{t,k}-\eta_k\mathcal{O}(G_k).
	\label{eq:polar_update}
\end{equation}
This distributes the step energy equally across
all active modes in Figure \ref{fig:spectral-training}.
SSB thus maximizes spectral participation on the
existing update support; it does not create additional
modes when $G_k$ is rank deficient.
Together, RRS and SSB determine which input directions
are available for adaptation and how evenly each step
uses its active singular modes.

\textbf{Efficient approximate implementation.}
Computing an SVD at every step is unnecessary.
Following Muon~\cite{jordan2024muon,bernstein2025derivingmuon},
we approximate the polar transformation using
Newton--Schulz polynomial iterations.
Starting from $Y_0=G_k/(\|G_k\|_F+\delta)$, with
$\delta>0$, we compute
\begin{equation}
	K_j=Y_j^\top Y_j,
	\:
	Y_{j+1}=aY_j+bY_jK_j+cY_jK_j^2,
	\label{eq:newton_schulz}
\end{equation}
for $j=0,\ldots,J-1$, and update
$B_{t,k+1}=B_{t,k}-\eta_kY_J$.
Each iteration reshapes the singular values through
a low-degree polynomial.
Since $K_j\in\mathbb{R}^{r\times r}$, the computation
exploits the small LoRA dimension.
Finite iterations provide approximate spectral balancing,
while spectral transfer from the factor step to the induced weight step remains exact.
The same spectrum-preserving identity holds between
$B_{t,k}$ and the accumulated adapter $B_{t,k}A_t$.
However, balanced individual steps need not yield
a balanced sum, because successive steps can reinforce
or cancel different directions.
Our theoretical analysis characterizes cumulative
spectral behavior under controlled cross-step anisotropy.

\begin{figure}[tb]
	\centering
	\includegraphics[width=\linewidth]{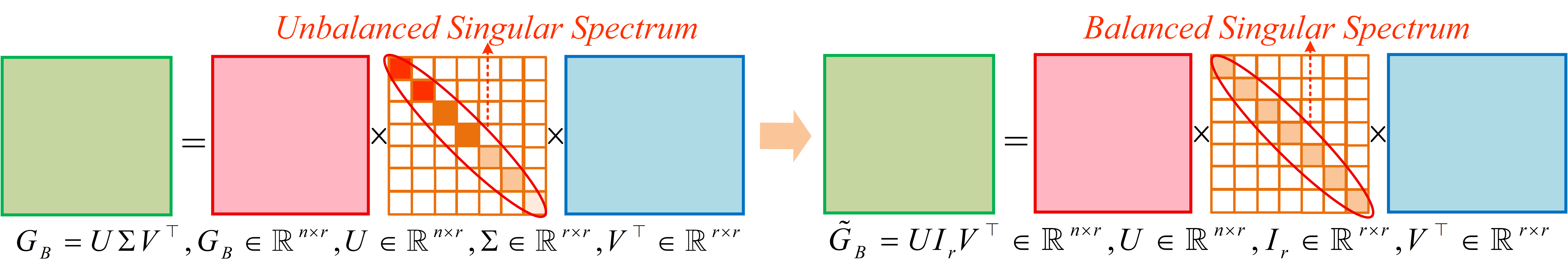}
	\vspace{-5mm}
	\caption{Spectral control of LoRA updates.
		An orthonormal input basis transfers the singular
		spectrum of each factor update to the induced
		weight update without distortion.}
	\label{fig:spectral-training}
	\vspace{-3mm}
\end{figure}

\section{Theoretical Analysis}
\label{sec:theoretical_analysis}


\paragraph{Subspace-wide relative-response optimality.}
RRS first identifies a dominant subspace in the historically
whitened coordinates and then maps it back to the original feature
space. The following result shows that this mapped subspace is
max--min optimal for the original relative-response criterion, while
Euclidean re-orthogonalization preserves the guarantee.

\begin{proposition}[Subspace-Wide Relative-Response Optimality]
	\label{prop:relative_response}
	Let
	$
	H_{t-1}=L_{t-1}L_{t-1}^{\top}\succ0
	$
	and
	$
	M_t\succeq0
	$,
	and define
	$
	\widetilde M_t
	=
	L_{t-1}^{-1}
	M^t
	L_{t-1}^{-\top}.
	$
	Let
	$
	\lambda_1\ge\cdots\ge\lambda_m\ge0
	$
	be the eigenvalues of $\widetilde M_t$, and let
	$
	U_r\in\mathbb{R}^{m\times r}
	$
	contain its leading $r$ orthonormal eigenvectors.
	Define the corresponding subspace in the original feature space as
	$
	\mathcal S_t
	=
	\operatorname{range}
	\!\left(
	L_{t-1}^{-\top}U_r
	\right).
	$
	Then every nonzero $v\in\mathcal S_t$ satisfies
	$
		\lambda_r
		\le
		\frac{v^\top M^t v}
		{v^\top H_{t-1}v}
		\le
		\lambda_1.
		\label{eq:relative_subspace_response}
	$
	Moreover,
	$
		\max_{\dim(\mathcal S)=r}
		\;
		\min_{v\in\mathcal S\setminus\{0\}}
		\frac{v^\top M^t v}
		{v^\top H_{t-1}v}
		=
		\lambda_r,
		\label{eq:relative_maxmin_response}
	$
	and the maximum is attained by $\mathcal S_t$.
	If
	$
	L_{t-1}^{-\top}U_r=Q_tR_t
	$
	is a thin QR factorization and
	$
	A_t=Q_t^\top
	$,
	then
	$
	A_tA_t^\top=I_r,
	\operatorname{row}(A_t)=\mathcal S_t.
	$
	Hence, Euclidean re-orthogonalization preserves the
	max--min-optimal RRS subspace while providing the
	row-orthonormal basis required for exact spectral transfer.
\end{proposition}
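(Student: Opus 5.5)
The whole argument runs through the whitening change of variables $u=L_{t-1}^{\top}v$, which is a linear bijection of $\mathbb{R}^m$ because $L_{t-1}$ is nonsingular. (We read $M^t$ in the statement as $M_t$.) For $v\neq0$ we have $v^\top H_{t-1}v=\|L_{t-1}^\top v\|_2^2=\|u\|_2^2$ and $v^\top M_t v=u^\top L_{t-1}^{-1}M_tL_{t-1}^{-\top}u=u^\top\widetilde M_t u$. So the relative response $\mathcal R_t(v)$ equals the Rayleigh quotient $u^\top\widetilde M_t u/u^\top u$ of the symmetric positive semidefinite matrix $\widetilde M_t$. The map $\mathcal S\mapsto L_{t-1}^\top\mathcal S$ is also a dimension-preserving bijection on $r$-dimensional subspaces, and it sends $\mathcal S_t$ to $\operatorname{range}(U_r)$. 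Every claim about $\mathcal R_t$ over subspaces of the original feature space therefore becomes the matching claim about Rayleigh quotients of $\widetilde M_t$ over subspaces of the whitened space.

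\textbf{Two-sided bound on $\mathcal S_t$.} Let $v\in\mathcal S_t$ be nonzero, so $u=U_rc$ with $c\neq0$. Then $u^\top\widetilde M_tu=\sum_{j\le r}\lambda_jc_j^2$ and $u^\top u=\|c\|^2$. This quotient lies between $\lambda_r$ and $\lambda_1$.

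\textbf{Max--min identity.} The choice $\mathcal S=\mathcal S_t$ gives a minimum of at least $\lambda_r$, and this minimum is attained at $v=L_{t-1}^{-\top}u_r$, so the maximum is at least $\lambda_r$. For the reverse inequality, take any $r$-dimensional $\mathcal S$ and set $\mathcal U=L_{t-1}^\top\mathcal S$. Let $W=\operatorname{span}(u_r,\dots,u_m)$, which has dimension $m-r+1$. Since $r+(m-r+1)>m$, the intersection $\mathcal U\cap W$ contains a nonzero $u$. On $W$ the Rayleigh quotient is at most $\lambda_r$. Hence $\min_{\mathcal S}\mathcal R_t\le\lambda_r$, which is the Courant--Fischer argument transported through $L_{t-1}$.

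\textbf{QR step.} The matrix $L_{t-1}^{-\top}U_r$ has full column rank $r$, because $L_{t-1}^{-\top}$ is invertible and $U_r$ has orthonormal columns. In the thin QR factorization $R_t$ is therefore nonsingular, so $\operatorname{range}(Q_t)=\operatorname{range}(L_{t-1}^{-\top}U_r)=\mathcal S_t$. Since $Q_t^\top Q_t=I_r$, the matrix $A_t=Q_t^\top$ satisfies $A_tA_t^\top=I_r$ and $\operatorname{row}(A_t)=\operatorname{range}(Q_t)=\mathcal S_t$.

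The max--min statement and the two-sided bound depend only on the subspace, not on the chosen basis. The guarantee therefore carries over to $A_t$ without change.

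\textbf{Main obstacle.} The only delicate step is the upper bound in the max--min identity. It needs the dimension-counting intersection, and the bijection on subspaces must be stated carefully so that arbitrary $\mathcal S$ in the original coordinates are covered. The remaining steps are direct computations.
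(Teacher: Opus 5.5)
Your proposal is correct and follows essentially the same route as the paper's proof. Both use the whitening change of variables $u=L_{t-1}^{\top}v$, the coefficient computation on $\operatorname{range}(U_r)$, the dimension-counting intersection with $\operatorname{span}\{u_r,\dots,u_m\}$ transported through the bijection $\mathcal S\mapsto L_{t-1}^{\top}\mathcal S$, and the full-column-rank QR argument. The one step you leave implicit, and which the paper states explicitly, is extending $U_r$ to a full orthonormal eigenbasis $u_1,\dots,u_m$ of $\widetilde M_t$ so that $W$ is well defined.
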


Proposition~\ref{prop:relative_response} provides a uniform
guarantee over the entire LoRA adaptation space:
even its least responsive direction achieves relative response
at least $\lambda_r$, and no other $r$-dimensional subspace can
attain a larger worst-direction response.
The result follows from the change of variables
$
u=L_{t-1}^{\top}v
$,
under which
$
\frac{v^\top M_t v}
{v^\top H_{t-1}v}
=
\frac{u^\top\widetilde M_tu}
{u^\top u},
$
together with the Courant--Fischer theorem.
Importantly, the subsequent QR factorization changes only the
basis of the selected subspace, not the subspace itself.
Thus, RRS simultaneously preserves its relative-response
optimality and yields $A_tA_t^\top=I_r$, directly connecting
subspace allocation to the spectrum-preserving LoRA
parameterization used by SSB.

\begin{figure}[t]
	\centering
	\captionsetup[subfigure]{font=small,labelfont=bf}
	\begin{subfigure}[t]{0.245\linewidth}
		\centering
		\includegraphics[width=\linewidth]{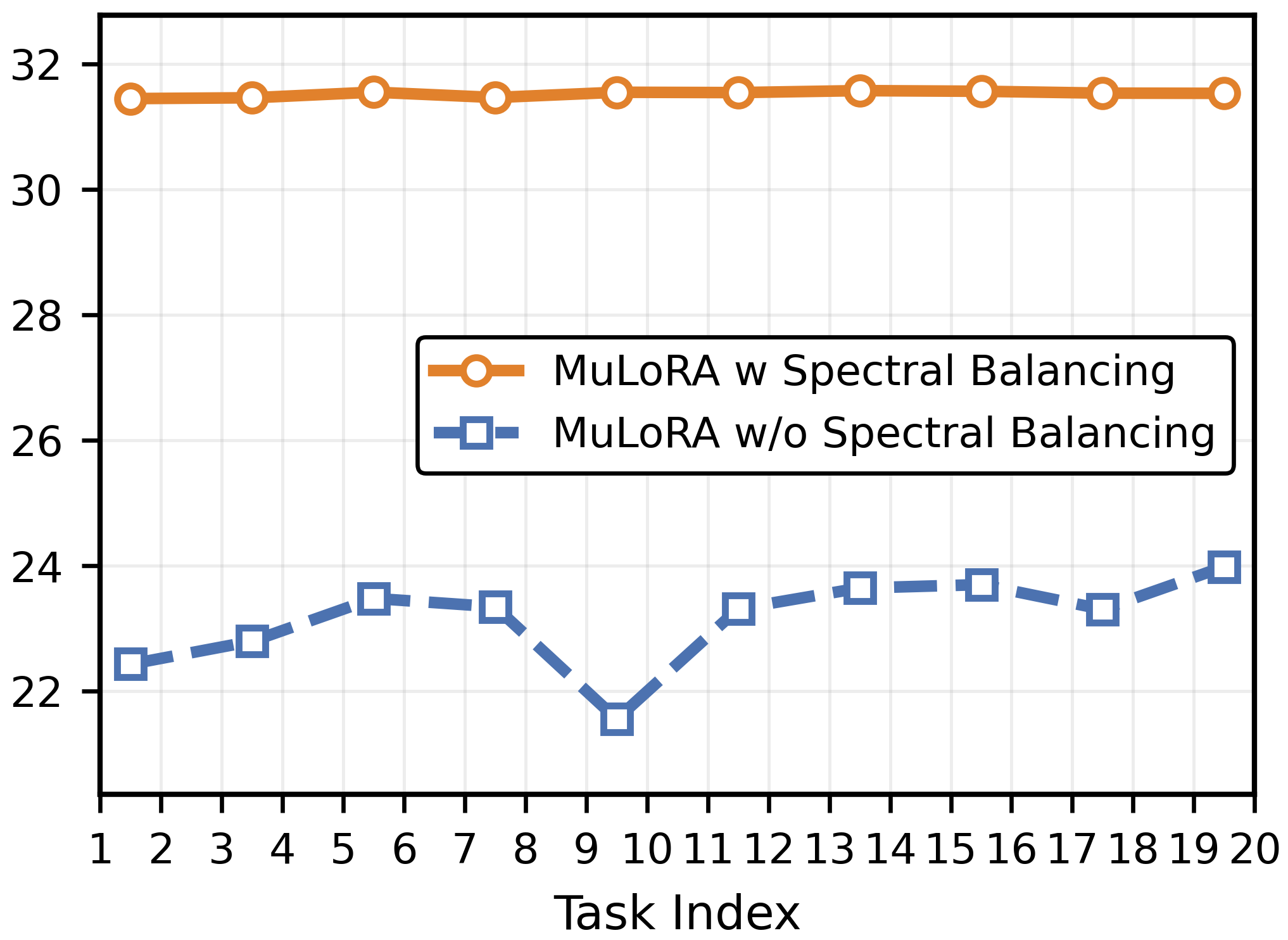}
		\caption{Effective rank}
		\label{fig:sp_old}
	\end{subfigure}
	\hfill
	\begin{subfigure}[t]{0.245\linewidth}
		\centering
		\includegraphics[width=\linewidth]{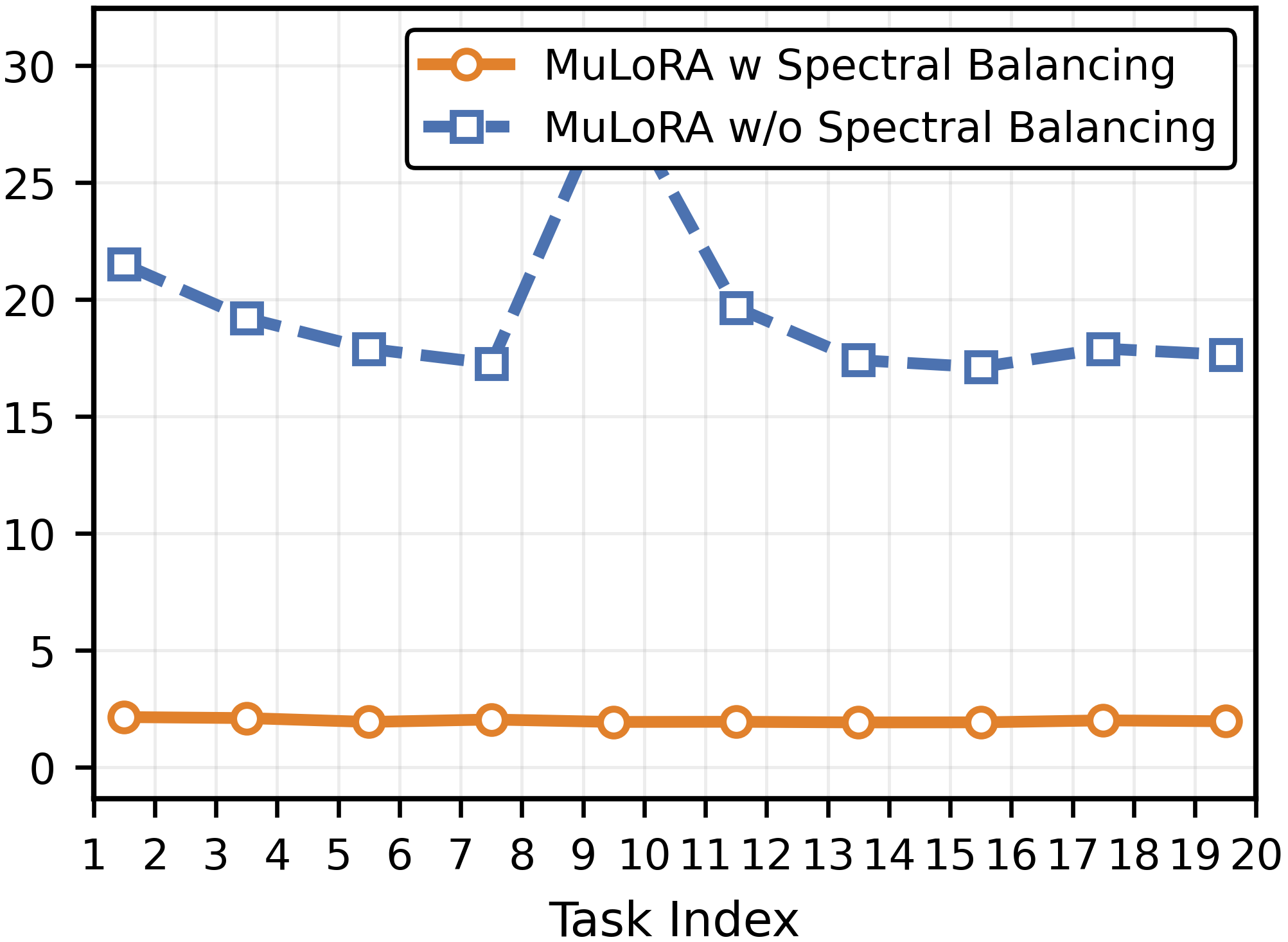}
		\caption{$\kappa_+$}
		\label{fig:sp_new}
	\end{subfigure}
	\hfill
	\begin{subfigure}[t]{0.245\linewidth}
		\centering
		\includegraphics[width=\linewidth]{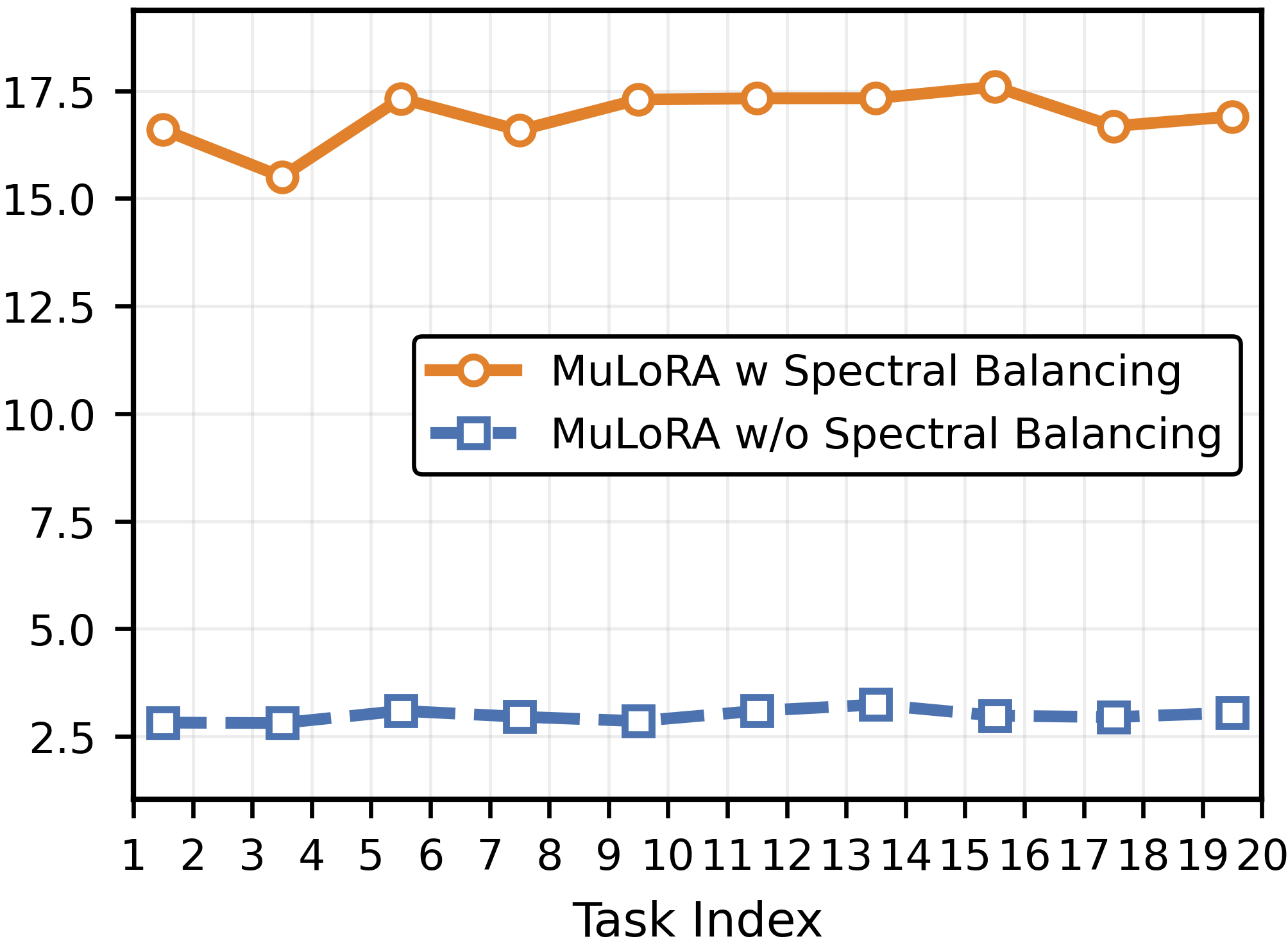}
		\caption{Stable rank}
		\label{fig:sp_old}
	\end{subfigure}
	\hfill
	\begin{subfigure}[t]{0.245\linewidth}
		\centering
		\includegraphics[width=\linewidth]{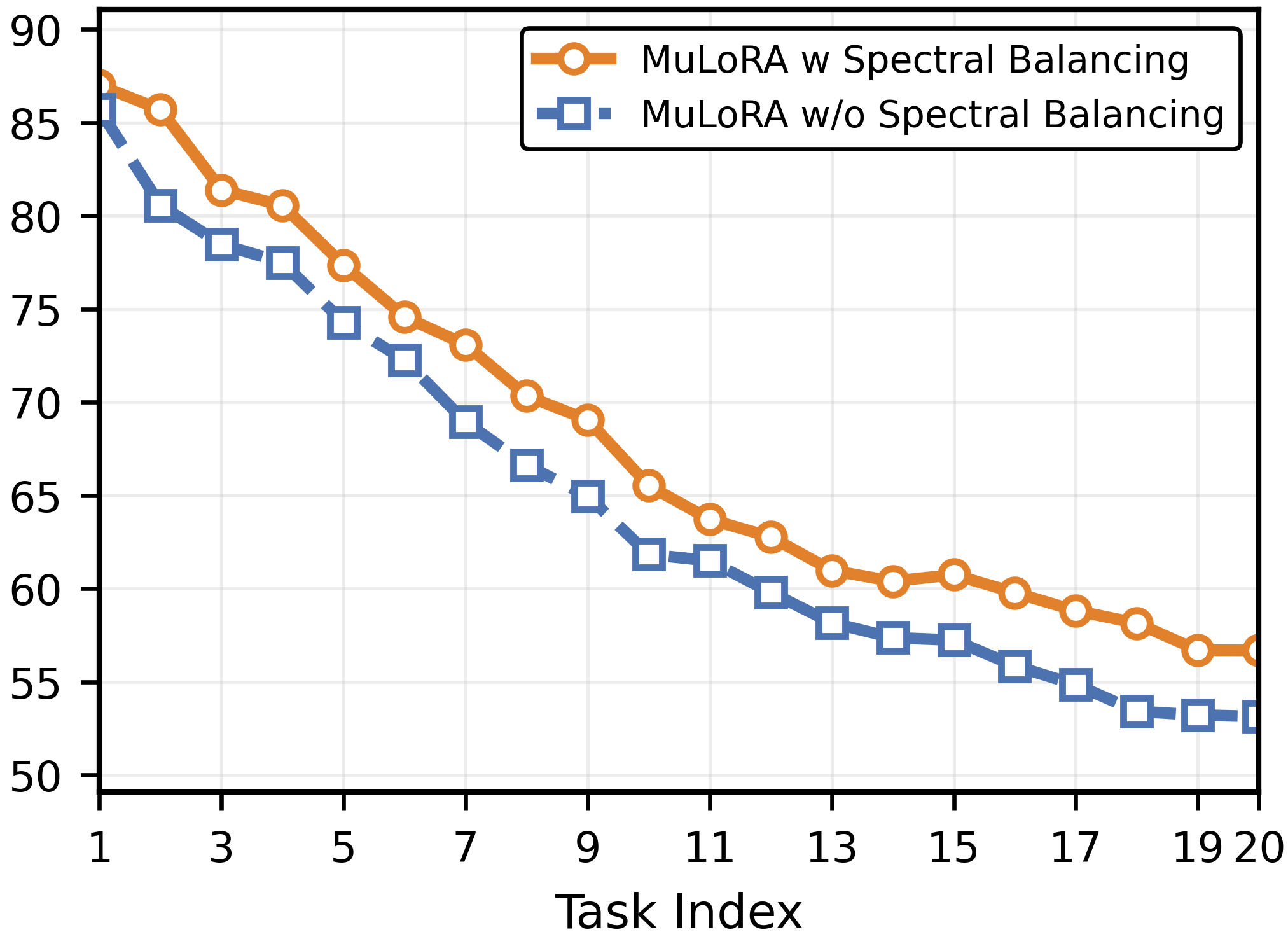}
		\caption{Overall Test Acc}
		\label{fig:sp_new}
	\end{subfigure}
	\vspace{-1.0mm}
	\caption{\textbf{Spectral balancing prevents rank collapse.}
		We track effective rank, $\kappa_{+}$, stable rank of $\Delta W$, and overall test accuracy
		across tasks, comparing \texttt{MuLoRA} with and without spectral balancing. SSB maintains high effective rank and stable rank while keeping the nonzero spectrum well conditioned throughout sequential adaptation. Importantly, this geometric advantage persists over the same tasks for which overall class-incremental accuracy is better preserved, suggesting that spectral utilization is closely associated with retained adaptation capacity.}
	\vspace{-4.0mm}
	\label{fig:rank}
\end{figure}
\subsection{Cumulative spectral balance}

Stepwise orthogonalization balances individual updates, but
their accumulation also depends on cross-step interactions.
We characterize how cumulative anisotropy controls the
condition number, stable rank, and entropy-based effective
rank of the accumulated update.
For a nonzero matrix $B$ with positive singular values
$\sigma_1(B),\ldots,\sigma_q(B)$, define
$
\operatorname{sr}(B)
=
\frac{\|B\|_F^2}{\|B\|_2^2},
\operatorname{erank}(B)
=
\exp\!\left(-\sum_{j=1}^{q}p_j\log p_j\right),
p_j
=
\frac{\sigma_j(B)}{\sum_{i=1}^{q}\sigma_i(B)}.
$
Here $\log$ denotes the natural logarithm.
For a full-column-rank matrix
$B\in\mathbb{R}^{n\times r}$, let
$\kappa(B)=\sigma_1(B)/\sigma_r(B)$.
For a general nonzero matrix, $\kappa_+$ denotes the ratio
of its largest to smallest positive singular value in Figure \ref{fig:rank}.

\begin{theorem}[Cumulative Spectral Balance]
	\label{thm:cumulative_balance}
	Let $K\ge1$, $n\ge r\ge1$, $B_0=0$, and
	$
	B_K=-\sum_{k=1}^{K}\eta_kD_k,
	\eta_k>0,
	D_k\in\mathbb{R}^{n\times r},
	D_k^\top D_k=I_r.
	$
	Define
	$
	C_K
	=
	\sum_{1\le i<j\le K}\eta_i\eta_j
	\left(D_i^\top D_j+D_j^\top D_i\right),
	$
$
	a_K
	=
	\sum_{k=1}^{K}\eta_k^2
	+
	\frac{\operatorname{tr}(C_K)}{r},
	E_K
	=
	C_K-\frac{\operatorname{tr}(C_K)}{r}I_r.
$
	Suppose $a_K>0$ and
	$\|E_K\|_2\le\rho a_K$ for some $\rho\in[0,1)$.
	Then $B_K$ has full column rank, and
	$
		\sqrt{a_K(1-\rho)}
		\le \sigma_j(B_K)
		\le \sqrt{a_K(1+\rho)},
		 j=1,\ldots,r.
		\label{eq:cumulative_spectrum_bound}
	$
	Moreover,
	$
		\kappa(B_K)
		=
		\sqrt{
			\frac{a_K+\lambda_{\max}(E_K)}
			{a_K+\lambda_{\min}(E_K)}
		}
		\le
		\sqrt{\frac{1+\rho}{1-\rho}},
		\label{eq:cumulative_condition_bound}
		\operatorname{sr}(B_K)
		=
		\frac{ra_K}{a_K+\lambda_{\max}(E_K)}
		\ge
		\frac{r}{1+\rho},
		\label{eq:cumulative_stable_rank_bound}
		\operatorname{erank}(B_K)
		\ge
		\frac{r}{1+\|E_K\|_F^2/(ra_K^2)}
		\ge
		\frac{r}{1+\rho^2}.
		\label{eq:cumulative_entropy_rank_bound}
	$
	If $A\in\mathbb{R}^{r\times m}$ satisfies $AA^\top=I_r$,
	then
	$
		\kappa_+(B_KA)=\kappa(B_K),
		\operatorname{sr}(B_KA)=\operatorname{sr}(B_K),
		\operatorname{erank}(B_KA)=\operatorname{erank}(B_K).
	$
\end{theorem}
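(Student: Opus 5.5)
The whole argument runs through the Gram matrix of $B_K$. Expanding $B_K^\top B_K=\sum_{i,k}\eta_i\eta_k D_i^\top D_k$ and using $D_k^\top D_k=I_r$ for the diagonal terms gives
\[
B_K^\top B_K=\Big(\sum_{k=1}^K\eta_k^2\Big)I_r+C_K=a_KI_r+E_K .
\]
Here $E_K$ is symmetric and traceless by construction. Let $e_1\ge\cdots\ge e_r$ be the eigenvalues of $E_K$, so that $|e_j|\le\|E_K\|_2\le\rho a_K$. The eigenvalues of $B_K^\top B_K$ are then $\mu_j=a_K+e_j\in[a_K(1-\rho),a_K(1+\rho)]$. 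Since $a_K>0$ and $\rho<1$, every $\mu_j$ is positive, so $B_K$ has full column rank. Taking square roots, $\sigma_j(B_K)=\sqrt{\mu_j}$ yields the two-sided singular value bound.

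\textbf{Condition number and stable rank.} These follow directly from the eigenvalues $\mu_j$. First, $\kappa(B_K)^2=\mu_1/\mu_r=(a_K+\lambda_{\max}(E_K))/(a_K+\lambda_{\min}(E_K))$, and this is at most $(1+\rho)/(1-\rho)$. For the stable rank, $\operatorname{tr}E_K=0$ gives $\|B_K\|_F^2=\sum_j\mu_j=ra_K$, while $\|B_K\|_2^2=a_K+\lambda_{\max}(E_K)\le a_K(1+\rho)$. Dividing gives the stated formula for $\operatorname{sr}(B_K)$ and the bound $r/(1+\rho)$.

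\textbf{Entropy rank.} This is the step needing a real inequality, because $p_j$ is built from $\sigma_j=\sqrt{\mu_j}$ rather than from $\mu_j$. I would proceed in three steps.
\begin{itemize}
\item[(i)] Shannon entropy dominates Rényi-2 entropy, so $\operatorname{erank}(B_K)\ge 1/\sum_jp_j^2$. This holds because $-\sum_j p_j\log p_j\ge-\log\sum_j p_j^2$ by Jensen's inequality applied to the concave function $\log$.
\item[(ii)] Compute $\sum_j p_j^2=\sum_j\mu_j\big/\big(\sum_j\sqrt{\mu_j}\big)^2=ra_K\big/\big(\sum_j\sqrt{\mu_j}\big)^2$.
\item[(iii)] Lower-bound $\sum_j\sqrt{\mu_j}$ using the first two moments of the $\mu_j$. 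Write $\mu_j=\mu_j^{1/3}\mu_j^{2/3}$ and apply Hölder with exponents $3/2$ and $3$:
\[
\sum_j\mu_j\le\Big(\sum_j\sqrt{\mu_j}\Big)^{2/3}\Big(\sum_j\mu_j^2\Big)^{1/3},
\qquad\text{hence}\qquad
\Big(\sum_j\sqrt{\mu_j}\Big)^2\ge\frac{\big(\sum_j\mu_j\big)^3}{\sum_j\mu_j^2}.
\]
\end{itemize}
Tracelessness again gives $\sum_j\mu_j^2=ra_K^2+\sum_je_j^2=ra_K^2(1+s)$, where $s=\|E_K\|_F^2/(ra_K^2)$. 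Substituting, $(\sum_j\sqrt{\mu_j})^2\ge r^3a_K^3/(ra_K^2(1+s))=r^2a_K/(1+s)$. Combining with (i) and (ii) gives $\operatorname{erank}(B_K)\ge r/(1+s)$. The final bound uses $\|E_K\|_F^2\le r\|E_K\|_2^2\le r\rho^2a_K^2$, so $s\le\rho^2$. I expect choosing this Hölder step to be the main obstacle; the rest is bookkeeping.

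\textbf{Invariance under $A$.} If $AA^\top=I_r$, then $(B_KA)(B_KA)^\top=B_KB_K^\top$. This matrix has the same nonzero eigenvalues as $B_K^\top B_K$. Hence $B_KA$ and $B_K$ have identical positive singular values, and $\kappa_+$, $\operatorname{sr}$ and $\operatorname{erank}$ depend only on these values. This is the same spectrum-transfer identity used for single steps in Section~\ref{sec:spectral_training}.
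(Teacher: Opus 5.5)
Your proposal is correct and follows the paper's proof essentially step for step. Both use the Gram decomposition $B_K^\top B_K=a_KI_r+E_K$, read off the singular-value, condition-number and stable-rank bounds from the eigenvalues $a_K+e_j$, and obtain the entropy-rank bound by the same chain: Jensen first, then H\"older with exponents $3/2$ and $3$ (the paper packages this as a separate lemma, $\operatorname{erank}(B)\ge S_1^2/S_2\ge S_2^2/S_4$), followed by $\operatorname{tr}\bigl((B_K^\top B_K)^2\bigr)=ra_K^2+\|E_K\|_F^2$; the transfer to $B_KA$ likewise uses the left-Gram identity $(B_KA)(B_KA)^\top=B_KB_K^\top$.
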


Theorem~\ref{thm:cumulative_balance} follows from the
decomposition $B_K^\top B_K=a_KI_r+E_K$.
The scalar $a_K=\|B_K\|_F^2/r$ determines the average
squared singular value, whereas $E_K/a_K$ measures
relative spectral anisotropy.
A smaller admissible $\rho$ tightens the condition-number
upper bound and raises the stable-rank and effective-rank
lower bounds.
When $E_K=0$, all singular values are equal, giving
$\kappa(B_K)=1$ and
$\operatorname{sr}(B_K)=\operatorname{erank}(B_K)=r$.
These guarantees are conditional on cumulative anisotropy:
stepwise orthogonalization alone does not ensure that the
condition holds.

\begin{corollary}[Robust Cumulative Spectral Balance]
	\label{cor:approx_cumulative_balance}
	Let $K\ge1$, $n\ge r\ge1$, and
	$
	B_K=-\sum_{k=1}^{K}\eta_kD_k,
	\eta_k>0,
	D_k\in\mathbb{R}^{n\times r},
	$
	where
	$
	\|D_k^\top D_k-I_r\|_2\le\varepsilon_k,
	\varepsilon_k\ge0.
	$
	Define $C_K$, $E_K$, and $a_K$ as in
	Theorem~\ref{thm:cumulative_balance}, and let
	$
	\delta_K
	=
	\sum_{k=1}^{K}\eta_k^2\varepsilon_k.
	$
	Suppose $a_K>0$ and
	$\|E_K\|_2\le\rho a_K$ for some $\rho\in[0,1)$.
	Then
	$
		a_K(1-\rho)-\delta_K
		\le \sigma_j(B_K)^2
		\le a_K(1+\rho)+\delta_K, j=1,\ldots,r.
		\label{eq:robust_cumulative_spectrum}
$
	If $\delta_K<a_K(1-\rho)$, define
	$\tau_K=\delta_K/a_K$.
	Then $B_K$ has full column rank, and
$
		\kappa(B_K)
		\le
		\sqrt{
			\frac{1+\rho+\tau_K}
			{1-\rho-\tau_K}
		},
		\label{eq:robust_condition_bound}
		\operatorname{sr}(B_K)
		\ge
		\frac{r(1-\tau_K)}
		{1+\rho+\tau_K},
		\label{eq:robust_stable_rank_bound}
		\operatorname{erank}(B_K)
	\ge
		\frac{r}{
			1+\left(
			\frac{\rho+\tau_K}{1-\tau_K}
			\right)^2
		}.
		\label{eq:robust_entropy_rank_bound}
$
	If $A\in\mathbb{R}^{r\times m}$ satisfies $AA^\top=I_r$,
	the same bounds hold for
	$\kappa_+(B_KA)$, $\operatorname{sr}(B_KA)$,
	and $\operatorname{erank}(B_KA)$, respectively.
\end{corollary}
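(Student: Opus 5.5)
The approach is to reduce everything to the spectrum of the Gram matrix $B_K^\top B_K$, exactly as in Theorem~\ref{thm:cumulative_balance}, and to treat the non-orthogonality of the $D_k$ as a small symmetric perturbation.

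\textbf{Step 1: perturbed Gram decomposition.} Expanding the square gives
\begin{equation*}
B_K^\top B_K=\sum_{k}\eta_k^2 D_k^\top D_k+C_K
= a_K I_r+E_K+\Delta_K,
\qquad
\Delta_K=\sum_{k}\eta_k^2\bigl(D_k^\top D_k-I_r\bigr).
\end{equation*}
Here I use $\sum_k\eta_k^2 I_r+C_K=a_KI_r+E_K$, which holds by the definitions of $a_K$ and $E_K$. The triangle inequality gives $\|\Delta_K\|_2\le\delta_K$. Since $E_K+\Delta_K$ is symmetric with $\|E_K+\Delta_K\|_2\le\rho a_K+\delta_K$, Weyl's inequality puts every eigenvalue $\mu_j=\sigma_j(B_K)^2$ in $[a_K(1-\rho)-\delta_K,\;a_K(1+\rho)+\delta_K]$. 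This is the first claim.

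\textbf{Step 2: rank, condition number and stable rank.} If $\delta_K<a_K(1-\rho)$, the lower endpoint is positive, so $B_K$ has full column rank. Dividing the two endpoints by $a_K$ and taking the square root of their ratio gives the bound on $\kappa(B_K)$.

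For the stable rank I need a lower bound on the Frobenius norm. Since $\operatorname{tr}(E_K)=0$ and $\operatorname{tr}(\Delta_K)\ge -r\delta_K$,
\begin{equation*}
\|B_K\|_F^2=\operatorname{tr}(B_K^\top B_K)\ge r a_K(1-\tau_K).
\end{equation*}
Dividing by the upper eigenvalue bound $\|B_K\|_2^2\le a_K(1+\rho+\tau_K)$ gives $\operatorname{sr}(B_K)\ge r(1-\tau_K)/(1+\rho+\tau_K)$.

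\textbf{Step 3: entropy rank (main obstacle).} I expect this to be the delicate step, because the perturbation has to be absorbed without losing a factor of $2$ in $\tau_K$. The plan has three parts.

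First, I compare $p_j\propto\sigma_j=\mu_j^{1/2}$ with $q_j=\mu_j/\sum_i\mu_i$. Raising to the power $1/2$ and renormalising flattens a distribution, so $q$ majorizes $p$. Shannon entropy is Schur-concave, so $H(p)\ge H(q)$. Since Shannon entropy dominates Rényi-$2$ entropy, $\exp H(q)\ge 1/\sum_j q_j^2$.

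Second, let $m=\frac1r\sum_j\mu_j$. Then
\begin{equation*}
\frac{1}{\sum_j q_j^2}=\frac{r}{1+\sum_j(\mu_j-m)^2/(r m^2)}.
\end{equation*}

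Third, I avoid recentering by $m$, which would cost $2\delta_K$. Instead I use that the mean minimises squared deviation:
\begin{equation*}
\sum_j(\mu_j-m)^2\le\sum_j(\mu_j-a_K)^2=\|E_K+\Delta_K\|_F^2\le r(\rho a_K+\delta_K)^2.
\end{equation*}
Combining this with $m\ge a_K(1-\tau_K)$ from Step 2 yields $\operatorname{erank}(B_K)\ge r/\bigl(1+((\rho+\tau_K)/(1-\tau_K))^2\bigr)$. If the majorization step feels too loose, the fallback is to reuse whatever inequality underlies the corresponding bound in Theorem~\ref{thm:cumulative_balance}, applied with $B_K^\top B_K=mI_r+F$ and $F$ traceless. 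The same deviation estimate then controls $\|F\|_F$.

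\textbf{Step 4: transfer to $B_KA$.} Since $AA^\top=I_r$, we have $(B_KA)(B_KA)^\top=B_KB_K^\top$. Hence $B_KA$ and $B_K$ have identical nonzero singular values. Therefore $\kappa_+$, $\operatorname{sr}$ and $\operatorname{erank}$ coincide for the two matrices, and the bounds carry over.
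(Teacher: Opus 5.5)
Your proposal is correct and follows the paper's proof essentially step for step. The shared steps are the perturbed Gram decomposition $B_K^\top B_K=a_KI_r+E_K+\Delta_K$ with $\|\Delta_K\|_2\le\delta_K$, the trace bound $\|B_K\|_F^2\ge ra_K(1-\tau_K)$ for the stable rank, and centering at the actual mean $\|B_K\|_F^2/r\ge a_K(1-\tau_K)$ with deviation bounded in Frobenius norm by $\sqrt r\,(\rho a_K+\delta_K)$, feeding a participation-ratio lower bound on $\operatorname{erank}$. The one genuine difference is how you obtain $\operatorname{erank}(B_K)\ge(\operatorname{tr}B_K^\top B_K)^2/\operatorname{tr}\bigl((B_K^\top B_K)^2\bigr)$: you use majorization, Schur-concavity, and Shannon $\ge$ R\'enyi-$2$, whereas the paper uses Jensen followed by H\"older in Lemma~\ref{lem:app_mulora_entropy}; both routes are valid and give identical final constants.
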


Corollary~\ref{cor:approx_cumulative_balance} separates
cross-step anisotropy $\rho$ from the relative accumulated
orthogonalization error $\tau_K$.
The condition $\rho+\tau_K<1$ ensures a positive lower
bound on the smallest squared singular value.
When $\tau_K=0$, the bounds reduce to those in
Theorem~\ref{thm:cumulative_balance}.
Proofs are provided in the appendix.

\begin{table*}[t]
	\centering
	\caption{\small Experimental results for 10 incremental tasks on ImageNetR, ImageNetA, CIFAR100, and CUB. We report the averaged results over 3 random seeds. Best and second-best results are \textbf{bold} and \underline{underlined}.}
	\vspace{-1mm}
	\setlength{\tabcolsep}{4pt}
	\resizebox{\textwidth}{!}{
		\begin{tabular}{lccccccccc}
			\toprule[1pt]
			
			\multirow{2}{*}{Method}
			& \multirow{2}{*}{Venue}
			& \multicolumn{2}{c}{10S-ImageNetR}
			& \multicolumn{2}{c}{10S-ImageNetA}
			& \multicolumn{2}{c}{10S-CIFAR100}
			& \multicolumn{2}{c}{10S-CUB}\\
			&
			& $\mathcal{A}_{Last}\uparrow$
			& $\mathcal{A}_{Avg}\uparrow$
			& $\mathcal{A}_{Last}\uparrow$
			& $\mathcal{A}_{Avg}\uparrow$
			& $\mathcal{A}_{Last}\uparrow$
			& $\mathcal{A}_{Avg}\uparrow$
			& $\mathcal{A}_{Last}\uparrow$
			& $\mathcal{A}_{Avg}\uparrow$\\
			\hline
			
			L2P
			& CVPR'22
			& $72.34_{\pm0.17}$
			& $77.36_{\pm0.64}$
			& $44.04_{\pm0.93}$
			& $51.24_{\pm2.26}$
			& $84.06_{\pm0.88}$
			& $88.26_{\pm1.34}$
			& $67.02_{\pm1.90}$
			& $79.62_{\pm1.60}$\\
			
			DualPrompt
			& ECCV'22
			& $69.10_{\pm0.62}$
			& $74.28_{\pm0.66}$
			& $53.19_{\pm0.74}$
			& $64.59_{\pm0.08}$
			& $86.93_{\pm0.24}$
			& $91.13_{\pm0.32}$
			& $68.48_{\pm0.47}$
			& $80.59_{\pm1.50}$\\
			
			CODAPrompt
			& CVPR'23
			& $73.31_{\pm0.50}$
			& $78.47_{\pm0.53}$
			& $52.08_{\pm0.12}$
			& $63.92_{\pm0.12}$
			& $83.21_{\pm3.39}$
			& $87.71_{\pm3.17}$
			& $77.23_{\pm1.12}$
			& $81.90_{\pm0.85}$\\
			
			InfLoRA
			& CVPR'24
			& $74.75_{\pm0.64}$
			& $80.67_{\pm0.55}$
			& $49.20_{\pm1.12}$
			& $60.92_{\pm0.61}$
			& $86.75_{\pm0.35}$
			& $91.72_{\pm0.15}$
			& $70.82_{\pm0.23}$
			& $81.39_{\pm0.14}$\\
			
			SD-LoRA
			& ICLR'25
			& $77.34_{\pm0.35}$
			& $82.04_{\pm0.24}$
			& $55.96_{\pm0.73}$
			& $64.95_{\pm1.63}$
			& $88.01_{\pm0.31}$
			& $92.54_{\pm0.18}$
			& $77.48_{\pm0.20}$
			& $85.59_{\pm0.44}$\\
			
			Bi-LoRA
			& CVPR'25
			& $77.95_{\pm0.14}$
			& $81.52_{\pm0.26}$
			& --
			& --
			& $87.46_{\pm0.76}$
			& $92.50_{\pm0.62}$
			& --
			& --\\
			
			PLAN
			& ICCV'25
			& $75.25_{\pm0.42}$
			& $80.41_{\pm0.56}$
			& --
			& --
			& $87.54_{\pm0.31}$
			& $92.21_{\pm0.35}$
			& --
			& --\\
			
			LoRA-P\&M
			& NeurIPS'25
			& $79.95_{\pm0.18}$
			& $85.29_{\pm0.93}$
			& $56.57_{\pm0.78}$
			& $65.35_{\pm1.81}$
			& $88.45_{\pm0.35}$
			& $92.89_{\pm1.13}$
			& $78.29_{\pm0.50}$
			& $83.39_{\pm0.61}$\\
			
			CoSO
			& NeurIPS'25
			& $\underline{81.10}_{\pm0.39}$
			& $85.56_{\pm0.13}$
			& --
			& --
			& $88.77_{\pm0.16}$
			& $92.99_{\pm0.23}$
			& --
			& --\\
			
			
			SplitLoRA
			& ICLR'26
			& $81.00_{\pm0.17}$
			& $\underline{85.84}_{\pm0.62}$
			& --
			& --
			& $\mathbf{90.33_{\pm0.73}}$
			& $\underline{93.70}_{\pm0.32}$
			& --
			& --\\
			
			EBLoRA
			& ICML'26
			& $79.56_{\pm0.68}$
			& $82.68_{\pm0.86}$
			& $\underline{61.56}_{\pm0.54}$
			& $\underline{69.28}_{\pm0.56}$
			& $88.98_{\pm0.61}$
			& $92.86_{\pm0.42}$
			& $\underline{78.92}_{\pm0.38}$
			& $\underline{88.64}_{\pm0.52}$\\
			
			EWC-LoRA
			& ICLR'26
			& $72.86_{\pm0.79}$
			& $78.95_{\pm0.86}$
			& $59.89_{\pm0.26}$
			& $68.33_{\pm0.67}$
			& $87.91_{\pm0.57}$
			& $92.27_{\pm0.39}$
			& $72.62_{\pm0.32}$
			& $83.52_{\pm0.54}$\\
			
			\midrule
			\texttt{MuLoRA (Ours)}
			& --
			& $\mathbf{81.77_{\pm0.25}}$
			& $\mathbf{87.28_{\pm0.28}}$
			& $\mathbf{62.23_{\pm0.52}}$
			& $\mathbf{71.27_{\pm1.02}}$
			& $\underline{90.03}_{\pm0.26}$
			& $\mathbf{93.81_{\pm1.42}}$
			& $\mathbf{80.27_{\pm0.25}}$
			& $\mathbf{89.47_{\pm0.42}}$\\
			
			\bottomrule[1pt]
		\end{tabular}
	}
	\vspace{-1mm}
	\label{tab:sota-tab1}
\end{table*}
\begin{table*}[t]
	\centering
	\caption{\small Experimental results for 5 and 20 incremental sessions on ImageNetR, 20 sessions on ImageNetA, and 10 sessions on DomainNet. The highest results are in \textbf{bold}, and the second highest results are \underline{underlined}.}
	\setlength{\tabcolsep}{4pt}
	\resizebox{\textwidth}{!}{
		\begin{tabular}{lccccccccc}
			\toprule[1pt]
			
			\multirow{2}{*}{Method}
			& \multirow{2}{*}{Venue}
			& \multicolumn{2}{c}{5S-ImageNetR}
			& \multicolumn{2}{c}{20S-ImageNetR}
			& \multicolumn{2}{c}{20S-ImageNetA}
			& \multicolumn{2}{c}{10S-DomainNet}\\
			&
			& $\mathcal{A}_{Last}\uparrow$
			& $\mathcal{A}_{Avg}\uparrow$
			& $\mathcal{A}_{Last}\uparrow$
			& $\mathcal{A}_{Avg}\uparrow$
			& $\mathcal{A}_{Last}\uparrow$
			& $\mathcal{A}_{Avg}\uparrow$
			& $\mathcal{A}_{Last}\uparrow$
			& $\mathcal{A}_{Avg}\uparrow$\\
			\hline
			
			L2P & CVPR'22
			& $70.83_{\pm0.58}$ & $78.34_{\pm0.47}$
			& $69.64_{\pm0.42}$ & $75.28_{\pm0.57}$
			& $40.48_{\pm1.78}$ & $49.62_{\pm1.46}$
			& $81.17_{\pm0.83}$ & $87.43_{\pm0.95}$\\
			
			DualPrompt & ECCV'22
			& $73.05_{\pm0.50}$ & $79.47_{\pm0.40}$
			& $66.61_{\pm0.58}$ & $72.45_{\pm0.37}$
			& $42.28_{\pm1.94}$ & $53.39_{\pm1.64}$
			& $81.70_{\pm0.78}$ & $87.80_{\pm0.99}$\\
			
			CODAPrompt & CVPR'23
			& $74.91_{\pm0.33}$ & $79.25_{\pm0.53}$
			& $69.96_{\pm0.50}$ & $75.34_{\pm0.85}$
			& $44.62_{\pm1.92}$ & $54.86_{\pm0.50}$
			& $80.04_{\pm0.79}$ & $86.27_{\pm0.82}$\\
			
			InfLoRA & CVPR'24
			& $77.52_{\pm0.37}$ & $82.01_{\pm0.12}$
			& $71.01_{\pm0.45}$ & $77.28_{\pm0.45}$
			& $46.81_{\pm2.30}$ & $57.13_{\pm0.95}$
			& $81.45_{\pm0.68}$ & $87.55_{\pm0.57}$\\
			
			SD-LoRA & ICLR'25
			& $79.15_{\pm0.20}$ & $83.01_{\pm0.42}$
			& $75.26_{\pm0.37}$ & $80.22_{\pm0.72}$
			& $49.12_{\pm1.78}$ & $59.63_{\pm1.57}$
			& $81.01_{\pm0.42}$ & $86.85_{\pm0.40}$\\
			
			Bi-LoRA & CVPR'25
			& $78.32_{\pm0.37}$ & $83.31_{\pm0.52}$
			& $72.41_{\pm0.65}$ & $79.28_{\pm0.76}$
			& -- & --
			& $76.56_{\pm0.41}$ & $83.20_{\pm0.35}$\\
			
			LoRA-P\&M & NeurIPS'25
			& $81.47_{\pm0.56}$ & $85.96_{\pm0.52}$
			& $76.37_{\pm0.09}$ & $82.77_{\pm0.31}$
			& $52.27_{\pm1.84}$ & $62.15_{\pm1.33}$
			& $\underline{84.35}_{\pm0.48}$
			& $\underline{89.43}_{\pm0.27}$\\
			
			CoSO & NeurIPS'25
			& $82.10_{\pm0.13}$
			& $\underline{86.38}_{\pm0.07}$
			& $78.19_{\pm0.28}$ & $83.69_{\pm0.12}$
			& -- & --
			& -- & --\\
			
			
			SplitLoRA & ICLR'26
			& $81.92_{\pm0.29}$ & $85.85_{\pm0.55}$
			& $\underline{78.82}_{\pm0.28}$
			& $\underline{84.57}_{\pm0.44}$
			& -- & --
			& $84.31_{\pm0.23}$ & $88.99_{\pm0.57}$\\
			
			EBLoRA & ICML'26
			& $\underline{82.56}_{\pm0.32}$
			& $86.11_{\pm0.25}$
			& $77.45_{\pm0.23}$ & $83.76_{\pm0.65}$
			& $\underline{52.52}_{\pm1.01}$
			& $\underline{62.16}_{\pm0.82}$
			& $83.67_{\pm0.14}$ & $89.27_{\pm0.21}$\\
			
			EWC-LoRA & ICLR'26
			& $76.36_{\pm0.25}$ & $82.12_{\pm0.28}$
			& $70.18_{\pm0.15}$ & $78.56_{\pm0.74}$
			& $48.58_{\pm1.15}$ & $59.06_{\pm0.82}$
			& $83.56_{\pm0.28}$ & $89.25_{\pm0.12}$\\
			
			\midrule
			\texttt{MuLoRA (Ours)} & --
			& $\mathbf{83.21}_{\pm0.24}$
			& $\mathbf{87.56}_{\pm0.32}$
			& $\mathbf{78.93}_{\pm0.18}$
			& $\mathbf{85.02}_{\pm0.21}$
			& $\mathbf{55.86}_{\pm0.61}$
			& $\mathbf{65.61}_{\pm0.64}$
			& $\mathbf{85.61}_{\pm0.10}$
			& $\mathbf{90.21}_{\pm0.32}$\\
			
			\bottomrule[1pt]
		\end{tabular}
	}
	\label{tab:sota-tab2}
	\vspace{-0.5mm}
\end{table*}
\section{Experiments}


\begin{table*}[t]
	\centering
	\caption{Ablation studies on ImageNetR and ImageNetA under
		10 and 20 incremental sessions, evaluating the individual
		and combined contributions of Relative-Response Spectral Allocation (RRS) and Subspace Spectral Balancing (SSB). Idx 1 uses a fixed random orthonormal $A$ and trains only $B$.}
	\label{tab:ablation-study}
	\setlength{\tabcolsep}{4pt}
	\vspace{-1mm}
	\resizebox{\textwidth}{!}{
		\begin{tabular}{ccccccccccc}
			\toprule[1pt]
			\multirow{2}{*}{Idx} &
			\multirow{2}{*}{RRS} &
			\multirow{2}{*}{SSB} &
			\multicolumn{2}{c}{10S-ImageNetR} &
			\multicolumn{2}{c}{10S-ImageNetA} &
			\multicolumn{2}{c}{20S-ImageNetR} &
			\multicolumn{2}{c}{20S-ImageNetA} \\
			\cmidrule(lr){4-5}
			\cmidrule(lr){6-7}
			\cmidrule(lr){8-9}
			\cmidrule(lr){10-11}
			& & &
			$A_{Last}\uparrow$ & $A_{Avg}\uparrow$ &
			$A_{Last}\uparrow$ & $A_{Avg}\uparrow$ &
			$A_{Last}\uparrow$ & $A_{Avg}\uparrow$ &
			$A_{Last}\uparrow$ & $A_{Avg}\uparrow$ \\
			\midrule
			1 & -- & -- &
			 $76.23_{\pm 0.18}$ & $82.05_{\pm 0.62}$
& $57.48_{\pm 1.47}$ & $67.34_{\pm 1.86}$
& $76.21_{\pm 0.18}$ & $80.26_{\pm 0.51}$
& $48.62_{\pm 1.28}$ & $60.52_{\pm 1.26}$ \\
			2 & \ding{51} & -- &
			$79.23_{\pm 0.28 }$ & $85.34_{\pm0.36 }$ &
			$59.77_{\pm0.56 }$ & $68.91_{\pm1.08 }$ &
			$77.86_{\pm0.20 }$ & $83.06_{\pm 0.32}$ &
			$51.92_{\pm 0.66}$ & $62.42_{\pm 0.51}$ \\
			3 & -- & \ding{51} &
			$80.74_{\pm 0.28}$ & $86.31_{\pm 0.35}$ &
			$61.86_{\pm 0.55}$ & $70.26_{\pm 0.98}$ &
			$78.50_{\pm 0.26}$ & $84.25_{\pm 0.36}$ &
			$54.32_{\pm 0.64}$ & $64.53_{\pm 0.51}$ \\
			4 & \ding{51} & \ding{51} &
			$\mathbf{81.77}_{\pm0.25}$ &
			$\mathbf{87.28}_{\pm0.28}$ &
			$\mathbf{62.23}_{\pm0.52}$ &
			$\mathbf{71.27}_{\pm1.02}$ &
			$\mathbf{78.93}_{\pm0.18}$ &
			$\mathbf{85.02}_{\pm0.21}$ &
			$\mathbf{55.86}_{\pm0.61}$ &
			$\mathbf{65.61}_{\pm0.64}$ \\
			\bottomrule[1pt]
		\end{tabular}
	}
	\vspace{-1mm}
\end{table*}

\begin{figure}[t]
	\centering
	\captionsetup[subfigure]{font=small,labelfont=bf}
	\begin{subfigure}[t]{0.24\linewidth}
		\centering
		\includegraphics[width=\linewidth]{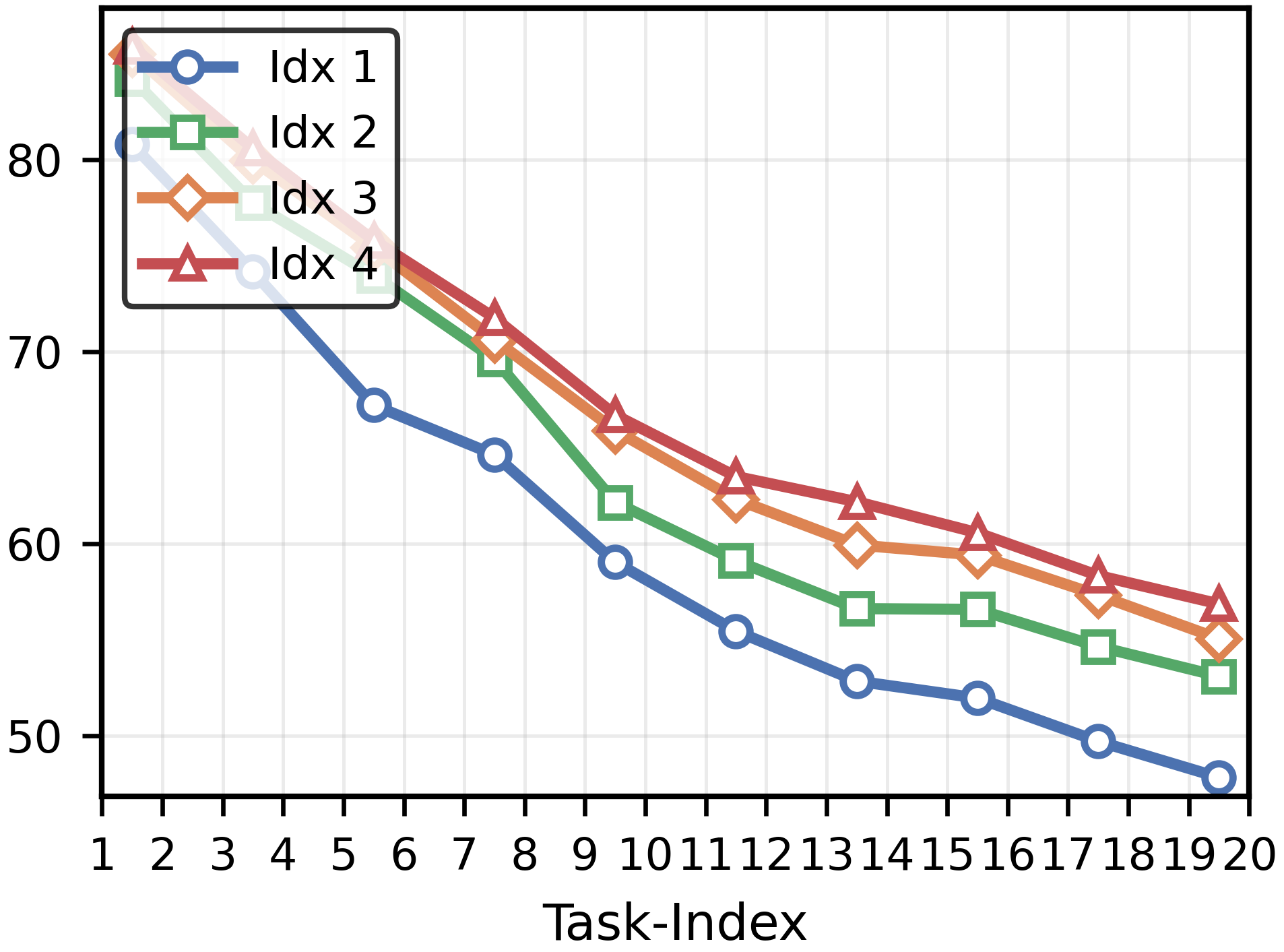}
		\caption{20S-ImageNetA (all)}
		\label{fig:sp_old}
	\end{subfigure}
	\hfill
	\begin{subfigure}[t]{0.24\linewidth}
		\centering
		\includegraphics[width=\linewidth]{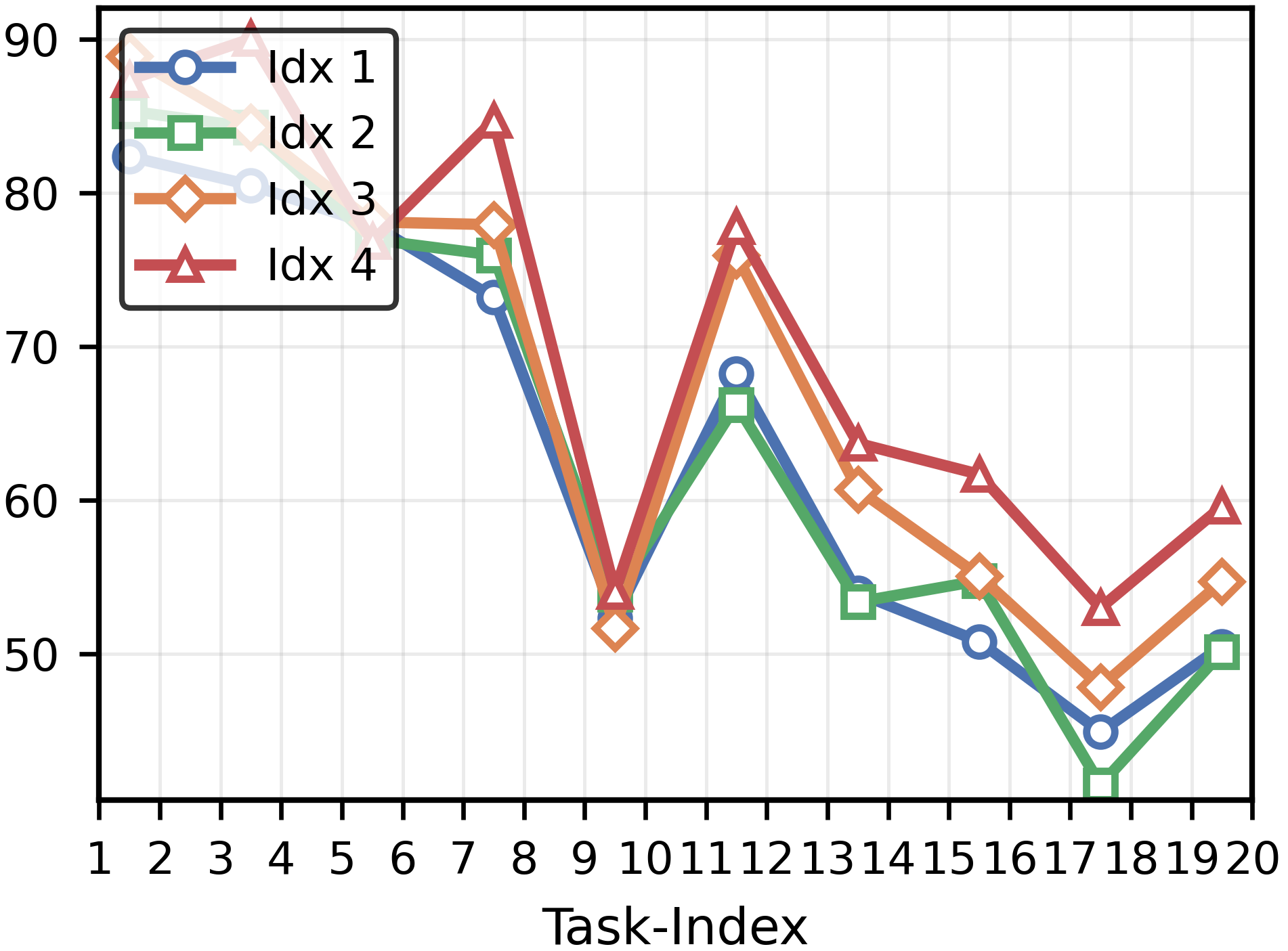}
		\caption{20S-ImageNetA (new)}
		\label{fig:sp_new}
	\end{subfigure}
	\hfill
	\begin{subfigure}[t]{0.24\linewidth}
		\centering
		\includegraphics[width=\linewidth]{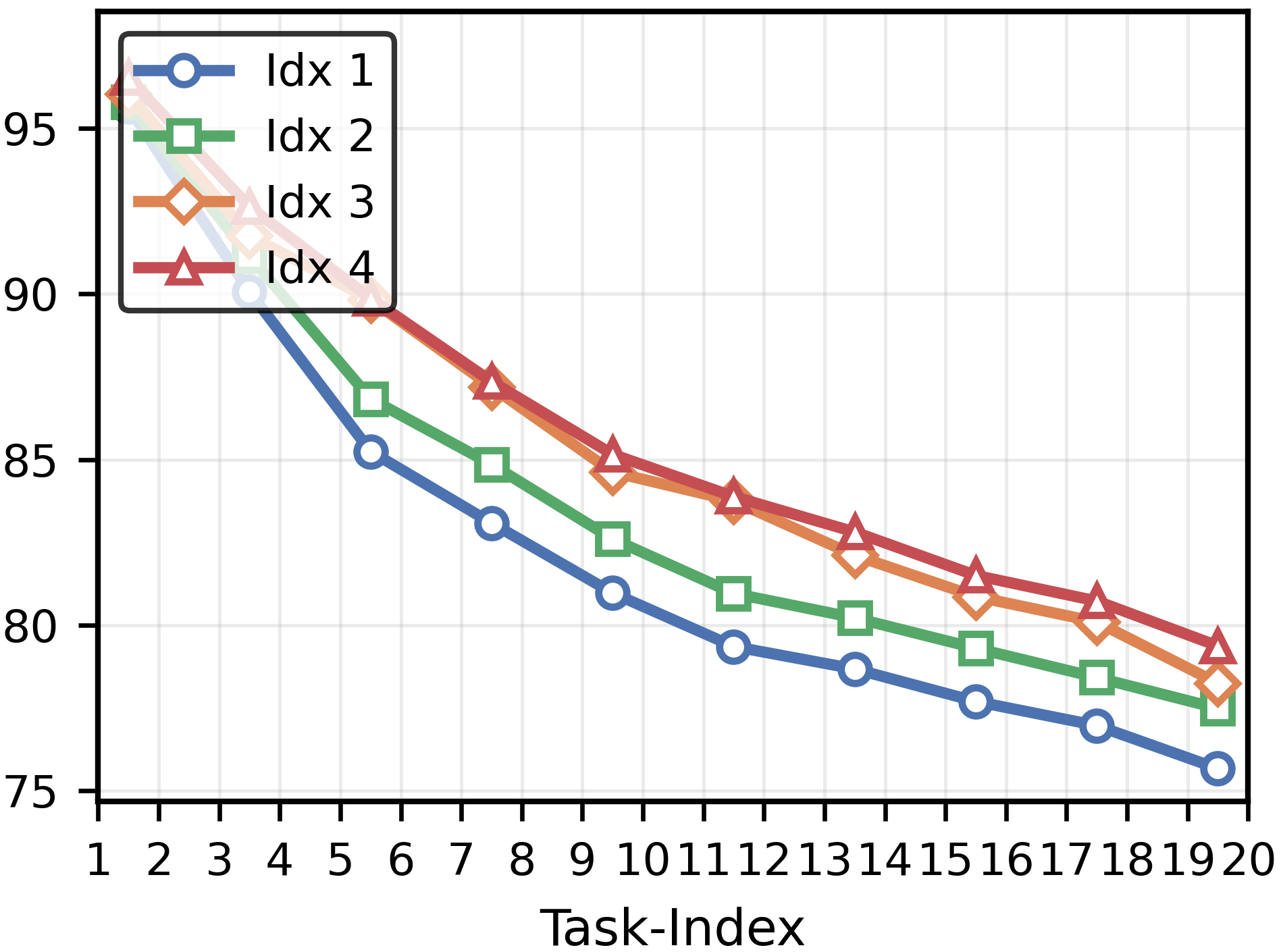}
		\caption{20S-ImageNetR (all)}
		\label{fig:sp_old}
	\end{subfigure}
	\hfill
	\begin{subfigure}[t]{0.24\linewidth}
		\centering
		\includegraphics[width=\linewidth]{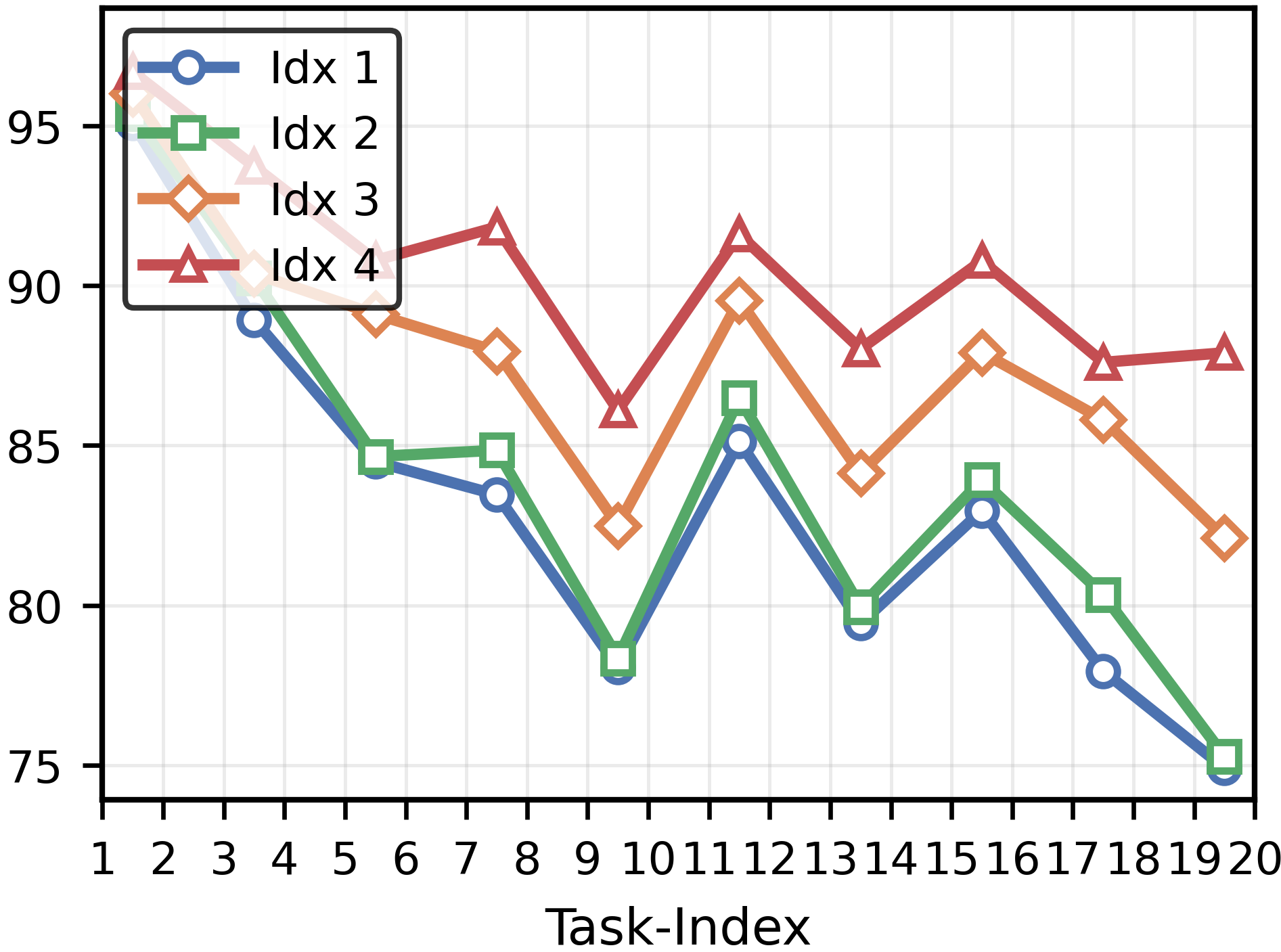}
		\caption{20S-ImageNetR (new)}
		\label{fig:sp_new}
	\end{subfigure}
	\vspace{-1.0mm}
	\caption{
		Stability--plasticity dynamics across incremental tasks.
		Overall and new-task accuracy on 20S-ImageNetA and 20S-ImageNetR.
		\texttt{MuLoRA} better maintains overall performance while preserving
		strong adaptation to newly introduced classes.}
	\vspace{-2.0mm}
	\label{fig:curve}
\end{figure}

\textbf{Datasets and Evaluation Metrics.} We evaluate our method on five datasets, namely ImageNetR~\cite{imagenet-r}, ImageNetA~\cite{imagenet-a}, CIFAR-100~\cite{cifar100}, CUB~\cite{CUB} and DomainNet~\cite{domainet}. ImageNet-R contains 30,000 images from 200 categories and diverse styles. ImageNet-A includes 7,500 real-world adversarial examples, spanning 200 classes. CIFAR-100 is a standard CL benchmark, including 60,000 32×32 images from 100 classes. DomainNet consists of images from 345 classes across 6 visual domains. Following prior works~\cite{cpromptCVPR24}, we select the top 200 most populous classes for evaluation. 
Following existing PEFT-based methods~\cite{ssaitCVPR24, macilICML2025}, we report results using 2 metrics: (1) Last accuracy ($A_{{last}}$), defined as the accuracy over all classes after the final session and (2) Average accuracy ($A_{Av g}$), defined as the mean accuracy across all incremental sessions.

\textbf{Implementation Details.} All experiments are conducted on a single NVIDIA RTX 4090 GPU. We use ViT-B/16~\cite{ViT} pre-trained on ImageNet-21K~\cite{imagenet-21k} as the backbone. The model is optimized with the Muon \cite{jordan2024muon} with a Cosine Annealing scheduler. We train each task for 20 epochs with a batch size of 48. The initial learning rate is set to 0.0001 for ImageNetR, and 0.0002 for others. Following~\cite{macilICML2025}, we insert \texttt{MuLoRA} into all attention blocks with rank 32, and employ a cosine classifier for  the classification task. 

\subsection{Comparison with Baseline Methods}

We compare \texttt{MuLoRA} with representative prompt-based,
adapter-based, and LoRA-based continual learning
methods~\cite{l2pCVPR22, dualpromptECCV22, vqprompt-nips24,
	ssaitCVPR24, laeICCV23, infloraCVPR24, sdloraICLR25,
	planICCV25, cosoNIPS25, lorapmNIPS25}.
As shown in Tables~\ref{tab:sota-tab1}
and~\ref{tab:sota-tab2}, \texttt{MuLoRA} achieves the
highest mean accuracy in 15 of 16 reported metrics
across five benchmarks and eight incremental settings.
These results support jointly addressing low-rank
capacity allocation and spectral utilization.
On 10S-ImageNetR and 10S-ImageNetA, \texttt{MuLoRA}
achieves $81.77\%$ and $62.23\%$ $A_{Last}$,
respectively, exceeding the strongest reported baseline
for each setting by $0.67$ percentage points.
The corresponding $A_{Avg}$ gains are $1.44$ and $1.99$
points, indicating improvements across the learning
sequence as well as at its endpoint.
The largest gains occur on 20S-ImageNetA:
\texttt{MuLoRA} reaches $55.86\%$ $A_{Last}$ and
$65.61\%$ $A_{Avg}$, improving over LoRA-P\&M
by $3.59$ and $3.46$ points, respectively.
On ImageNetR, it also obtains the highest means under
both 5- and 20-session protocols, although the
20-session $A_{Last}$ margin is modest ($0.11$ points).
The benefits extend to CUB and DomainNet, where
\texttt{MuLoRA} improves $A_{Last}$ over the strongest
reported baselines by $1.35$ and $1.26$ points,
respectively.
On CIFAR-100, it achieves the highest $A_{Avg}$
($93.81\%$), while its $A_{Last}$ ($90.03\%$)
is $0.30$ points below SplitLoRA.
Overall, the results demonstrate broad empirical
benefits from combining task-responsive subspace
selection with spectrally controlled updates,
without replaying past examples or historical features.

\subsection{Ablation Studies}
\paragraph{Effectiveness of RRS and SSB.}
Table~\ref{tab:ablation-study} and Figure \ref{fig:curve} evaluate the individual
and combined effects of RRS and SSB on ImageNetR and
ImageNetA under 10- and 20-session protocols.
Both components improve performance across all four settings.
RRS alone increases $A_{Last}$ by
$1.65$--$3.30$ percentage points over the baseline,
supporting the benefit of selecting adaptation directions
based on current-task response relative to historical exposure.
SSB alone improves $A_{Last}$ by $2.29$--$5.70$ points,
yielding larger gains than RRS alone in each setting
and supporting the effectiveness of spectral balancing
within the adaptation space.
Combining RRS and SSB achieves the highest mean accuracy
on all eight reported metrics.
Compared with SSB alone, adding RRS further improves
$A_{Last}$ by $0.37$--$1.54$ points and
$A_{Avg}$ by $0.77$--$1.08$ points.
On 20S-ImageNetA, the full model reaches
$55.86\%$ $A_{Last}$ and $65.61\%$ $A_{Avg}$,
exceeding the baseline by $7.24$ and $5.09$ points,
respectively.
These results support the complementary benefits of
task-responsive subspace selection and spectral balancing:
their combination consistently outperforms either
component alone.



\section{Conclusion}

We introduced \texttt{MuLoRA} to address \emph{\textbf{ spectral plasticity collapse}}
that can leave low-rank adaptation capacity underutilized
in continual learning.
\texttt{MuLoRA} couples history-aware subspace selection
with Muon-style spectral balancing through a fixed
row-orthonormal basis, transferring factor-step spectra
exactly to weight space.
Our analysis establishes the max--min optimality of exact
subspace selection and provides conditional spectral
guarantees for accumulated updates.
Experiments across five benchmarks and eight incremental
settings support a central design principle:
effective adaptation requires both selecting task-responsive
directions and controlling how updates use them.

\newpage

\bibliographystyle{iclr2027_conference}
\bibliography{aaai2027}

@inproceedings{l2pCVPR22,
	title = {Learning to prompt for continual learning},
	author = {Wang, Zifeng and Zhang, Zizhao and Lee, Chen-Yu and Zhang, Han and Sun, Ruoxi and Ren, Xiaoqi and Su, Guolong and Perot, Vincent and Dy, Jennifer and Pfister, Tomas},
	booktitle = {Proceedings of the IEEE/CVF conference on computer vision and pattern recognition},
	pages = {139--149},
	year = {2022},
}

@inproceedings{dualpromptECCV22,
	title = {Dualprompt: Complementary prompting for rehearsal-free continual learning},
	author = {Wang, Zifeng and Zhang, Zizhao and Ebrahimi, Sayna and Sun, Ruoxi and Zhang, Han and Lee, Chen-Yu and Ren, Xiaoqi and Su, Guolong and Perot, Vincent and Dy, Jennifer and others},
	booktitle = {European Conference on Computer Vision},
	pages = {631--648},
	year = {2022},
	organization = {Springer},
}

@inproceedings{codaCVPR23,
	title = {Coda-prompt: Continual decomposed attention-based prompting for rehearsal-free continual learning},
	author = {Smith, James Seale and Karlinsky, Leonid and Gutta, Vyshnavi and Cascante-Bonilla, Paola and Kim, Donghyun and Arbelle, Assaf and Panda, Rameswar and Feris, Rogerio and Kira, Zsolt},
	booktitle = {Proceedings of the IEEE/CVF Conference on Computer Vision and Pattern Recognition},
	pages = {11909--11919},
	year = {2023},
}

@inproceedings{laeICCV23,
	title = {A unified continual learning framework with general parameter-efficient tuning},
	author = {Gao, Qiankun and Zhao, Chen and Sun, Yifan and Xi, Teng and Zhang, Gang and Ghanem, Bernard and Zhang, Jian},
	booktitle = {Proceedings of the IEEE/CVF International Conference on Computer Vision},
	pages = {11483--11493},
	year = {2023},
}

@inproceedings{ssaitCVPR24,
	title = {Semantically-Shifted Incremental Adapter-Tuning is A Continual ViTransformer},
	author = {Tan, Yuwen and Zhou, Qinhao and Xiang, Xiang and Wang, Ke and Wu, Yuchuan and Li, Yongbin},
	booktitle = {Proceedings of the IEEE/CVF Conference on Computer Vision and Pattern Recognition},
	pages = {23252--23262},
	year = {2024},
}

@inproceedings{cpromptCVPR24,
	title = {Consistent Prompting for Rehearsal-Free Continual Learning},
	author = {Gao, Zhanxin and Cen, Jun and Chang, Xiaobin},
	booktitle = {Proceedings of the IEEE/CVF Conference on Computer Vision and Pattern Recognition},
	pages = {28463--28473},
	year = {2024},
}

@inproceedings{infloraCVPR24,
	title = {InfLoRA: Interference-Free Low-Rank Adaptation for Continual Learning},
	author = {Liang, Yan-Shuo and Li, Wu-Jun},
	booktitle = {Proceedings of the IEEE/CVF Conference on Computer Vision and Pattern Recognition},
	pages = {23638--23647},
	year = {2024},
}

@inproceedings{vptnspNIPS-24,
	title = {Visual {{Prompt Tuning}} in {{Null Space}} for {{Continual Learning}}},
	booktitle = {Conference on {{Neural Information Processing Systems}}},
	author = {Lu, Yue and Zhang, Shizhou and Cheng, De and Xing, Yinghui and Wang, Nannan and Wang, Peng and Zhang, Yanning},
	year = 2024,
}

@inproceedings{pgpICLR24,
	title = {Prompt Gradient Projection for Continual Learning},
	author = {Qiao, Jingyang and Tan, Xin and Chen, Chengwei and Qu, Yanyun and Peng, Yong and Xie, Yuan and others},
	booktitle = {The Twelfth International Conference on Learning Representations},
	year = {2023},
}

@article{ViT,
	title = {An Image is Worth 16x16 Words: Transformers for Image Recognition at Scale},
	author = {Dosovitskiy, Alexey and Beyer, Lucas and Kolesnikov, Alexander and Weissenborn, Dirk and Zhai, Xiaohua and Unterthiner, Thomas and  Dehghani, Mostafa and Minderer, Matthias and Heigold, Georg and Gelly, Sylvain and Uszkoreit, Jakob and Houlsby, Neil},
	journal = {ICLR},
	year = {2021},
}

@inproceedings{lora,
	title = {Lo{RA}: Low-Rank Adaptation of Large Language Models},
	author = {Edward J Hu and Yelong Shen and Phillip Wallis and Zeyuan Allen-Zhu and Yuanzhi Li and Shean Wang and Lu Wang and Weizhu Chen},
	booktitle = {International Conference on Learning Representations},
	year = {2022},
	url = {https://openreview.net/forum?id=nZeVKeeFYf9},
}

@inproceedings{imagenet-r,
	title = {The many faces of robustness: A critical analysis of out-of-distribution generalization},
	author = {Hendrycks, Dan and Basart, Steven and Mu, Norman and Kadavath, Saurav and Wang, Frank and Dorundo, Evan and Desai, Rahul and Zhu, Tyler and Parajuli, Samyak and Guo, Mike and others},
	booktitle = {Proceedings of the IEEE/CVF international conference on computer vision},
	pages = {8340--8349},
	year = {2021},
}

@inproceedings{imagenet-a,
	title = {Natural adversarial examples},
	author = {Hendrycks, Dan and Zhao, Kevin and Basart, Steven and Steinhardt, Jacob and Song, Dawn},
	booktitle = {Proceedings of the IEEE/CVF conference on computer vision and pattern recognition},
	pages = {15262--15271},
	year = {2021},
}

@article{cifar100,
	title = {Learning multiple layers of features from tiny images},
	author = {Krizhevsky, Alex and Hinton, Geoffrey and others},
	year = {2009},
	publisher = {Toronto, ON, Canada},
}

@inproceedings{domainet,
	title = {Moment matching for multi-source domain adaptation},
	author = {Peng, Xingchao and Bai, Qinxun and Xia, Xide and Huang, Zijun and Saenko, Kate and Wang, Bo},
	booktitle = {Proceedings of the IEEE/CVF international conference on computer vision},
	pages = {1406--1415},
	year = {2019},
}

@article{imagenet-21k,
	title = {Imagenet large scale visual recognition challenge},
	author = {Russakovsky, Olga and Deng, Jia and Su, Hao and Krause, Jonathan and Satheesh, Sanjeev and Ma, Sean and Huang, Zhiheng and Karpathy, Andrej and Khosla, Aditya and Bernstein, Michael and others},
	journal = {International journal of computer vision},
	volume = {115},
	pages = {211--252},
	year = {2015},
	publisher = {Springer},
}

@article{EwC,
	title = {Overcoming catastrophic forgetting in neural networks},
	author = {Kirkpatrick, James and Pascanu, Razvan and Rabinowitz, Neil and Veness, Joel and Desjardins, Guillaume and Rusu, Andrei A and Milan, Kieran and Quan, John and Ramalho, Tiago and Grabska-Barwinska, Agnieszka and others},
	journal = {Proceedings of the national academy of sciences},
	volume = {114},
	number = {13},
	pages = {3521--3526},
	year = {2017},
	publisher = {National Acad Sciences},
}

@inproceedings{2021nullspace,
	title = {Training networks in null space of feature covariance for continual learning},
	author = {Wang, Shipeng and Li, Xiaorong and Sun, Jian and Xu, Zongben},
	booktitle = {Proceedings of the IEEE/CVF conference on Computer Vision and Pattern Recognition},
	pages = {184--193},
	year = {2021},
}

@article{CUB,
	title = {The caltech-ucsd birds-200-2011 dataset},
	author = {Wah, Catherine and Branson, Steve and Welinder, Peter and Perona, Pietro and Belongie, Serge},
	year = {2011},
	publisher = {California Institute of Technology},
}

@inproceedings{macilICML2025,
	title = {Navigating Semantic Drift in Task-Agnostic Class-Incremental Learning},
	author = {Fangwen Wu and Lechao Cheng and Shengeng Tang and Xiaofeng Zhu and Chaowei Fang and Dingwen Zhang and Meng Wang},
	booktitle = {Forty-second International Conference on Machine Learning},
	year = {2025},
}

@inproceedings{2021clip,
	title = {Learning transferable visual models from natural language supervision},
	author = {Radford, Alec and Kim, Jong Wook and Hallacy, Chris and Ramesh, Aditya and Goh, Gabriel and Agarwal, Sandhini and Sastry, Girish and Askell, Amanda and Mishkin, Pamela and Clark, Jack and others},
	booktitle = {International conference on machine learning},
	pages = {8748--8763},
	year = {2021},
	organization = {PmLR},
}

@inproceedings{DIA-cvpr25,
	title = {Dynamic integration of task-specific adapters for class incremental learning},
	author = {Li, Jiashuo and Wang, Shaokun and Qian, Bo and He, Yuhang and Wei, Xing and Wang, Qiang and Gong, Yihong},
	booktitle = {Proceedings of the Computer Vision and Pattern Recognition Conference},
	pages = {30545--30555},
	year = {2025},
}

@article{vqprompt-nips24,
	title = {Vector Quantization Prompting for Continual Learning},
	author = {Jiao, Li and Lai, Qiuxia and Li, Yu and Xu, Qiang},
	journal = {NeurIPS},
	year = {2024},
}

@inproceedings{biloraCVPR25,
	title = {BiLoRA: Almost-Orthogonal Parameter Spaces for Continual Learning},
	author = {Zhu, Hao and Zhang, Yifei and Dong, Junhao and Koniusz, Piotr},
	booktitle = {Proceedings of the Computer Vision and Pattern Recognition Conference},
	pages = {25613--25622},
	year = {2025},
}

@inproceedings{planICCV25,
	title = {PLAN: Proactive Low-Rank Allocation for Continual Learning},
	author = {Wang, Xiequn and Zhuang, Zhan and Zhang, Yu},
	booktitle = {Proceedings of the IEEE/CVF International Conference on Computer Vision},
	pages = {2909--2918},
	year = {2025},
}

@inproceedings{cosoNIPS25,
	title = {Continuous Subspace Optimization for Continual Learning},
	author = {Cheng, Quan and Wan, Yuanyu and Wu, Lingyu and Hou, Chenping and Zhang, Lijun},
	booktitle = {Conference on {{Neural Information Processing Systems}}},
	year = 2025,
}

@inproceedings{lorapmNIPS25,
	title = {Train with Perturbation, Infer after Merging: A Two-Stage Framework for Continual Learning},
	booktitle = {Conference on {{Neural Information Processing Systems}}},
	author = {Qiu, Haomiao and Zhang, Miao and Qiao, Ziyue and Nie, Liqiang},
	year = 2025,
}

@inproceedings{sdloraICLR25,
	title = {{SD}-Lo{RA}: Scalable Decoupled Low-Rank Adaptation for Class Incremental Learning},
	author = {Yichen Wu and Hongming Piao and Long-Kai Huang and Renzhen Wang and Wanhua Li and Hanspeter Pfister and Deyu Meng and Kede Ma and Ying Wei},
	booktitle = {The Thirteenth International Conference on Learning Representations},
	year = {2025},
	url = {https://openreview.net/forum?id=5U1rlpX68A},
}

@inproceedings{olora-acl2023,
	title = {Orthogonal subspace learning for language model continual learning},
	author = {Wang, Xiao and Chen, Tianze and Ge, Qiming and Xia, Han and Bao, Rong and Zheng, Rui and Zhang, Qi and Gui, Tao and Huang, Xuan-Jing},
	booktitle = {Findings of the Association for Computational Linguistics: EMNLP 2023},
	pages = {10658--10671},
	year = {2023},
}

@inproceedings{cheng2024disentangled,
	title = {Disentangled prompt representation for domain generalization},
	author = {Cheng, De and Xu, Zhipeng and Jiang, Xinyang and Wang, Nannan and Li, Dongsheng and Gao, Xinbo},
	booktitle = {Proceedings of the IEEE/CVF Conference on Computer Vision and Pattern Recognition},
	pages = {23595--23604},
	year = {2024},
}

@article{wang2026ekpc,
	title = {EKPC: Elastic Knowledge Preservation and Compensation for Class-Incremental Learning: H. Wang et al.},
	author = {Wang, Huaijie and Cheng, De and He, Lingfeng and Li, Yan and Li, Jie and Wang, Nannan and Gao, Xinbo},
	journal = {International Journal of Computer Vision},
	volume = {134},
	number = {5},
	pages = {238},
	year = {2026},
	publisher = {Springer},
}

@article{he2025ckaa,
	title = {Ckaa: Cross-subspace knowledge alignment and aggregation for robust continual learning},
	author = {He, Lingfeng and Cheng, De and Ma, Zhiheng and Wang, Huaijie and Zhang, Dingwen and Wang, Nannan and Gao, Xinbo},
	journal = {arXiv preprint arXiv:2507.09471},
	year = {2025},
}

@article{qiu2025splitlora,
	title = {SplitLoRA: Balancing Stability and Plasticity in Continual Learning Through Gradient Space Splitting},
	author = {Qiu, Haomiao and Zhang, Miao and Qiao, Ziyue and Guan, Weili and Zhang, Min and Nie, Liqiang},
	journal = {arXiv preprint arXiv:2505.22370},
	year = {2025},
}

@article{halko2011finding,
	title = {Finding structure with randomness: Probabilistic algorithms for constructing approximate matrix decompositions},
	author = {Halko, Nathan and Martinsson, Per-Gunnar and Tropp, Joel A.},
	journal = {SIAM Review},
	volume = {53},
	number = {2},
	pages = {217--288},
	year = {2011},
	publisher = {SIAM},
}

@article{jordan2024muon,
	title = {Muon: An optimizer for hidden layers in neural networks, 2024},
	author = {Jordan, Keller and Jin, Yuchen and Boza, Vlado and You, Jiacheng and Cesista, Franz and Newhouse, Laker and Bernstein, Jeremy},
	journal = {URL https://kellerjordan. github. io/posts/muon},
	volume = {6},
	number = {3},
	pages = {4},
	year = {2024},
	url = {https://kellerjordan.github.io/posts/muon/},
}

@misc{bernstein2025derivingmuon,
	author = {Jeremy Bernstein},
	title = {Deriving {Muon}},
	year = {2025},
	url = {https://jeremybernste.in/writing/deriving-muon},
	note = {Technical blog post},
}

@article{gu2026spectral,
	title = {Spectral Imbalance Causes Forgetting in Low-Rank Continual Adaptation},
	author = {Gu, Hao and Luo, Mao-Lin and Zhou, Zi-Hao and
	Zhang, Han-Chen and Zhang, Min-Ling and Wei, Tong},
	journal = {arXiv preprint arXiv:2602.00722},
	year = {2026},
	doi = {10.48550/arXiv.2602.00722},
	url = {https://arxiv.org/abs/2602.00722},
}

@inproceedings{zheng2026revisiting,
	title = {Revisiting Weight Regularization for Low-Rank Continual Learning},
	author = {Zheng, Yaoyue and Zhang, Yin and van de Weijer, Joost and
	van de Ven, Gido M. and Du, Shaoyi and Zhang, Xuetao and
	Tian, Zhiqiang},
	booktitle = {International Conference on Learning Representations},
	year = {2026},
	url = {https://openreview.net/forum?id=pZj2DhfaVD},
}

\newpage
\section*{Ethics Statement}

This work studies parameter-efficient continual learning using
existing image classification benchmarks: ImageNet-R, ImageNet-A,
CIFAR-100, CUB, and DomainNet.
Our experiments use benchmark datasets and pretrained models
rather than collecting new data from human participants.
Although MuLoRA aims to improve adaptation across sequential
tasks, it does not provide formal guarantees of privacy,
fairness, or robustness.
Models may inherit biases from pretraining and downstream
datasets, and aggregate classification accuracy does not
establish reliable performance across demographic groups
or deployment conditions.
Deployment in sensitive applications therefore requires
appropriate privacy safeguards, bias assessment, and
application-specific evaluation.

\section*{Reproducibility Statement}

The paper describes MuLoRA's task-adaptive subspace construction
and momentum orthogonalization, together with the assumptions
underlying its theoretical results.
The appendix provides detailed proofs of the subspace selection
characterization, conditional cumulative spectral bounds, and
preservation of nonzero singular values under a row-orthonormal
basis.
The experimental section specifies the benchmark datasets,
incremental settings, evaluation metrics, pretrained ViT-B/16
backbone, adapter placement and rank, optimizer, learning rates,
learning-rate schedule, training epochs, and batch size.
Experiments are conducted on a single NVIDIA RTX 4090 GPU,
and the main comparison reports results averaged over three
random seeds.
We will publicly release the implementation and experiment
scripts upon acceptance.

\section*{AI Use Statement}

We used large language models to assist with language editing,
including sentence rephrasing, grammar correction, and
readability improvements.
We did not use these tools for research ideation, methodology
or experiment design, theoretical derivations, method
implementation, data generation or processing, or
interpretation of results.
The authors reviewed and revised all AI-assisted text and
take full responsibility for the final manuscript, including
its claims, analyses, and conclusions.

\appendix
\newpage


\clearpage
\startcontents[appendices]
\section*{Appendix Contents}
\printcontents[appendices]{}{1}{\setcounter{tocdepth}{2}}
\clearpage

\section{Proofs of the Theoretical Results}
\label{app:mulora_proofs}

We prove the exact relative-response guarantee and the cumulative
spectral bounds stated in the main text. All matrices are real, and
all logarithms are natural. Singular values are ordered nonincreasingly.
For the cumulative analysis, $n\ge r$; all $r$ singular values of
$B\in\mathbb R^{n\times r}$ are retained, including zeros. The cumulative analysis concerns optimization steps
within one task, during which the input basis is fixed.

\subsection{Two spectral preliminaries}
\label{app:mulora_preliminaries}

\paragraph{Spectral quantities.}
For a nonzero matrix $B$ with positive singular values
$s_1\ge\cdots\ge s_q>0$, we use
\begin{equation}
	\operatorname{sr}(B)=\frac{\sum_{j=1}^q s_j^2}{s_1^2},\qquad
	\kappa_+(B)=\frac{s_1}{s_q},\qquad
	\operatorname{erank}(B)=\exp\!\left(-\sum_{j=1}^q p_j\log p_j\right),
	\quad p_j=\frac{s_j}{\sum_i s_i}.
\end{equation}
Entropy-based effective rank differs from the energy participation ratio
$(\sum_j s_j^2)^2/\sum_j s_j^4$; the next lemma relates them by inequalities.
Neither is the algebraic rank. In particular, $\kappa_+$ excludes zero
singular values and must be interpreted together with rank information.
For the zero matrix, normalized spectral statistics are undefined.

\begin{lemma}[Preservation of the nonzero singular spectrum]
	\label{lem:app_mulora_transfer}
	Let $B\in\mathbb R^{n\times r}$ and $A\in\mathbb R^{r\times m}$
	satisfy $AA^\top=I_r$. Then $BA$ and $B$ have the same nonzero
	singular values, including multiplicities, and the same rank.
	Consequently, for $B\ne0$, their stable ranks, entropy-based
	effective ranks, and nonzero condition numbers coincide.
	If $B$ has full column rank, then $\kappa_+(BA)=\kappa(B)$.
\end{lemma}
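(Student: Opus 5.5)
The plan is to compare the Gram matrices of $BA$ and $B$ and read off the singular values from them. For $BA\in\mathbb R^{n\times m}$ we have
\[
(BA)(BA)^\top = B\,AA^\top B^\top = BB^\top .
\]
So the $n\times n$ positive semidefinite matrices $(BA)(BA)^\top$ and $BB^\top$ coincide and have identical eigenvalues with multiplicities. The nonzero singular values of any matrix $X$ are the square roots of the positive eigenvalues of $XX^\top$, counted with multiplicity. Hence $BA$ and $B$ have the same nonzero singular values with multiplicities. Taking square roots is harmless because both eigenvalue lists are nonnegative. The rank equals the number of positive eigenvalues of $XX^\top$, since $\operatorname{rank}(X)=\operatorname{rank}(XX^\top)$, so the ranks also agree.

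An alternative route, which I would mention as a check, goes through the compact SVD. If $B=U\Sigma V^\top$, then $BA=U\Sigma (A^\top V)^\top$. Moreover $(A^\top V)^\top(A^\top V)=V^\top AA^\top V=V^\top V=I_q$, so $A^\top V$ has orthonormal columns and this expression is a compact SVD of $BA$ with the same $\Sigma$. This version also shows that the left singular vectors are unchanged.

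For the consequences, assume $B\neq0$. Then $BA\neq0$ as well, since the two ranks are equal. Each of $\operatorname{sr}$, $\operatorname{erank}$ and $\kappa_+$, as defined in the spectral-quantities paragraph, is a function only of the multiset of positive singular values $s_1\ge\cdots\ge s_q>0$. Therefore all three coincide for $B$ and $BA$.

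Finally, suppose $B$ has full column rank. Then $q=r$ and all $r$ singular values of $B$ are positive, so $\kappa_+(B)=s_1/s_r=\kappa(B)$. Combined with the previous paragraph, this gives $\kappa_+(BA)=\kappa(B)$.

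There is no real obstacle. The only point needing care is bookkeeping: $BA$ has $\min(n,m)$ singular values, while $B$ has $r$ of them with zeros retained. The claim therefore must be, and is, stated only for nonzero singular values, and the Gram-matrix identity handles exactly that.
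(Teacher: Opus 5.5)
Your proposal is correct and follows the paper's argument. Both proofs observe that $(BA)(BA)^\top = BAA^\top B^\top = BB^\top$, read off the nonzero singular values, their multiplicities, and the rank from this common left Gram matrix, and note that each spectral statistic depends only on those values; your compact-SVD check and the explicit full-column-rank step ($q=r$, so $\kappa_+(B)=\kappa(B)$) spell out details the paper leaves implicit.
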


\begin{proof}
	The left Gram matrices coincide:
	\begin{equation}
		(BA)(BA)^\top=BAA^\top B^\top=BB^\top.
		\label{eq:app_mulora_left_gram}
	\end{equation}
	The positive eigenvalues of a left Gram matrix are precisely the
	squares of the nonzero singular values of the corresponding matrix.
	Thus the nonzero singular spectra, their multiplicities, and the
	ranks agree. Each stated spectral statistic is determined entirely
	by these nonzero singular values, which proves the remaining claims.
\end{proof}

\begin{lemma}[Entropy effective rank and energy participation]
	\label{lem:app_mulora_entropy}
	Let $B\ne0$ have positive singular values $s_1,\ldots,s_q$, and set
	\begin{equation}
		S_\ell=\sum_{j=1}^q s_j^\ell,\qquad
		p_j=\frac{s_j}{S_1},\qquad
		\operatorname{erank}(B)=\exp\left(-\sum_{j=1}^q p_j\log p_j\right).
	\end{equation}
	Then
	\begin{equation}
		\operatorname{erank}(B)
		\ge \frac{S_1^2}{S_2}
		\ge \frac{S_2^2}{S_4}
		=\frac{\bigl(\operatorname{tr}(B^\top B)\bigr)^2}
		{\operatorname{tr}\bigl((B^\top B)^2\bigr)}.
		\label{eq:app_mulora_entropy_participation}
	\end{equation}
	The last quantity is the participation rank of the squared singular
	values; it is generally distinct from $\operatorname{erank}(B)$.
	If all positive singular values are equal, all quantities in
	\eqref{eq:app_mulora_entropy_participation} equal $q$.
\end{lemma}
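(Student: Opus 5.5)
The plan is to prove the chain in \eqref{eq:app_mulora_entropy_participation} one link at a time. The first inequality compares Shannon and collision entropy via Jensen's inequality. The second is a power-sum inequality, which I would handle with H\"older's inequality. The final equality is a trace identity.

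\emph{First inequality.} Write $H=-\sum_j p_j\log p_j=\sum_j p_j\log(1/p_j)$. This is the expectation of $\log(1/p_J)$ when the index $J$ is drawn with $\Pr(J=j)=p_j$. Since $\log$ is concave, Jensen's inequality gives
\[
H\le \log \mathbb E[1/p_J]=\log q .
\]
That is the wrong direction for our purpose. Instead, view $H=-\mathbb E[\log p_J]$ and apply Jensen to the convex function $-\log$:
\[
-\mathbb E[\log p_J]\ge -\log \mathbb E[p_J]=-\log\sum_j p_j^2 .
\]
Exponentiating yields
\[
\operatorname{erank}(B)\ge \Bigl(\sum_j p_j^2\Bigr)^{-1}=\frac{S_1^2}{S_2}.
\]
The last step uses $p_j=s_j/S_1$, so that $\sum_j p_j^2=S_2/S_1^2$.

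\emph{Second inequality.} The claim $S_1^2/S_2\ge S_2^2/S_4$ is equivalent to $S_2^3\le S_1^2S_4$. Factor each term as $s_j^2=s_j^{2/3}\,s_j^{4/3}$ and apply H\"older's inequality with conjugate exponents $3/2$ and $3$:
\[
S_2=\sum_j s_j^{2/3}s_j^{4/3}\le\Bigl(\sum_j s_j\Bigr)^{2/3}\Bigl(\sum_j s_j^4\Bigr)^{1/3}.
\]
Cubing both sides gives $S_2^3\le S_1^2S_4$. Choosing these exponents is the only step that requires a little care. A fallback is log-convexity of $\ell\mapsto \log S_\ell$, which is itself a consequence of H\"older, applied at the points $1,2,4$ with interpolation weights $2/3$ and $1/3$.

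\emph{Trace identity and equality case.} The eigenvalues of $B^\top B$ are the $s_j^2$ together with possible zeros. Hence $\operatorname{tr}(B^\top B)=S_2$ and $\operatorname{tr}\bigl((B^\top B)^2\bigr)=S_4$, which gives the final equality. When all $s_j$ equal a common value $s$, we have $p_j=1/q$, so $\operatorname{erank}(B)=q$. In that case $S_1^2/S_2=q^2s^2/(qs^2)=q$ and $S_2^2/S_4=q^2s^4/(qs^4)=q$, so every quantity in the chain equals $q$.
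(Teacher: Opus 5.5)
Your proposal is correct and follows the paper's proof essentially step for step. It uses concavity of the logarithm (Jensen) to get $\operatorname{erank}(B)\ge 1/\sum_j p_j^2=S_1^2/S_2$, then H\"older with exponents $3/2$ and $3$ on $s_j^{2/3}s_j^{4/3}$ to get $S_2^3\le S_1^2S_4$, and finally the trace identity together with direct evaluation for the equal-singular-value case; the opening detour through the upper bound $\log q$ is harmless and can simply be dropped.
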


\begin{proof}
	By concavity of the logarithm, using the positive weights $p_j$,
	\begin{equation}
		\sum_{j=1}^q p_j\log p_j
		\le \log\left(\sum_{j=1}^q p_j^2\right).
	\end{equation}
	Negating and exponentiating gives
	\begin{equation}
		\operatorname{erank}(B)
		\ge \frac{1}{\sum_j p_j^2}
		=\frac{S_1^2}{S_2}.
	\end{equation}
	Next, H\"older's inequality with conjugate exponents $3/2$ and $3$
	yields
	\begin{equation}
		S_2=\sum_{j=1}^q s_j^{2/3}s_j^{4/3}
		\le \left(\sum_{j=1}^q s_j\right)^{2/3}
		\left(\sum_{j=1}^q s_j^4\right)^{1/3}
		=S_1^{2/3}S_4^{1/3}.
	\end{equation}
	Therefore $S_2^3\le S_1^2S_4$. Since $S_2,S_4>0$, division gives
	$S_1^2/S_2\ge S_2^2/S_4$.
	Finally, $\operatorname{tr}(B^\top B)=S_2$ and
	$\operatorname{tr}((B^\top B)^2)=S_4$, including when $B$ is rank
	deficient. Equal positive singular values give $p_j=1/q$ and equality
	throughout.
\end{proof}

\subsection{Exact relative-response subspace selection}
\label{app:mulora_relative_response}

This subsection proves Proposition~\ref{prop:relative_response}.

\begin{proof}
	Assume $1\le r\le m$, and abbreviate $H=H_{t-1}$, $L=L_{t-1}$,
	$M=M^t$, and $\widetilde M=L^{-1}ML^{-\top}$.
	Since $H=LL^\top\succ0$, $L$ is invertible. Moreover,
	$\widetilde M$ is symmetric positive semidefinite.
	
	\paragraph{Step 1: Transform the relative response.}
	For every $v\ne0$, set $u=L^\top v$, so that $v=L^{-\top}u$.
	Then
	\begin{equation}
		v^\top Hv=u^\top u,\qquad
		v^\top Mv=u^\top\widetilde M u.
	\end{equation}
	Consequently,
	\begin{equation}
		\frac{v^\top Mv}{v^\top Hv}
		=\frac{u^\top\widetilde M u}{u^\top u}.
		\label{eq:app_mulora_rayleigh_change}
	\end{equation}
	Thus historical whitening converts the generalized Rayleigh quotient
	into an ordinary Rayleigh quotient through an invertible change of
	coordinates.
	
	\paragraph{Step 2: Bound every direction in the selected subspace.}
	Let $\Lambda_r=\operatorname{diag}(\lambda_1,\ldots,\lambda_r)$.
	Since $U_r^\top U_r=I_r$ and $\widetilde M U_r=U_r\Lambda_r$,
	every nonzero $v\in\mathcal S_t$ can be written uniquely as
	$v=L^{-\top}U_rc$ for some $c\in\mathbb R^r\setminus\{0\}$.
	Eq.~\eqref{eq:app_mulora_rayleigh_change} gives
	\begin{equation}
		\frac{v^\top Mv}{v^\top Hv}
		=\frac{c^\top\Lambda_rc}{c^\top c}
		=\frac{\sum_{j=1}^r\lambda_j c_j^2}{\sum_{j=1}^r c_j^2}
		\in[\lambda_r,\lambda_1].
		\label{eq:app_mulora_selected_response}
	\end{equation}
	Choosing $c$ to be the $r$-th coordinate vector attains the lower
	endpoint. Hence the minimum response over $\mathcal S_t\setminus\{0\}$
	is exactly $\lambda_r$.
	
	\paragraph{Step 3: Establish global max--min optimality.}
	The map $\mathcal S\mapsto L^\top\mathcal S$ is a bijection on the
	set of $r$-dimensional subspaces. Therefore
	\begin{align}
		\max_{\dim(\mathcal S)=r}\min_{v\in\mathcal S\setminus\{0\}}
		\frac{v^\top Mv}{v^\top Hv}
		&=\max_{\dim(\mathcal U)=r}\min_{u\in\mathcal U\setminus\{0\}}
		\frac{u^\top\widetilde M u}{u^\top u}
		=\lambda_r.
		\label{eq:app_mulora_maxmin}
	\end{align}
	For completeness, the last equality can also be verified directly.
	Extend the columns of $U_r$ to an orthonormal eigenbasis
	$u_1,\ldots,u_m$ of $\widetilde M$, ordered by eigenvalue.
	Every $r$-dimensional subspace $\mathcal U$ intersects
	$\operatorname{span}\{u_r,\ldots,u_m\}$ nontrivially, since their
	dimensions sum to $m+1$. A nonzero vector in the intersection has
	Rayleigh quotient at most $\lambda_r$. Thus no such $\mathcal U$
	has minimum Rayleigh quotient greater than $\lambda_r$.
	The subspace $\operatorname{range}(U_r)$ attains $\lambda_r$ by
	\eqref{eq:app_mulora_selected_response}. This is the relevant
	Courant--Fischer characterization.
	
	\paragraph{Step 4: Verify preservation under Euclidean QR.}
	The matrix $Z=L^{-\top}U_r$ has full column rank. In its thin QR
	factorization $Z=Q_tR_t$, $Q_t^\top Q_t=I_r$ and $R_t$ is
	invertible. Hence
	\begin{equation}
		\operatorname{range}(Q_t)=\operatorname{range}(Z)=\mathcal S_t.
	\end{equation}
	Defining $A_t=Q_t^\top$ therefore gives
	\begin{equation}
		A_tA_t^\top=I_r,\qquad
		\operatorname{row}(A_t)=\operatorname{range}(A_t^\top)=\mathcal S_t,
	\end{equation}
	where row vectors are identified with their transposes.
	Thus QR preserves the max--min-optimal subspace and provides the
	orthonormal parameterization required by
	Lemma~\ref{lem:app_mulora_transfer}.
\end{proof}

\paragraph{Scope of the subspace guarantee.}
The proposition concerns exact leading eigenvectors. Repeated
eigenvalues may make the optimal subspace nonunique, but do not affect
the result. With randomized SVD, QR still preserves the computed
subspace exactly; global max--min optimality is not automatic for
that approximate subspace. Also, QR need not preserve the constraint
$Z^\top HZ=I_r$: it preserves the subspace and its relative-response
guarantee, while replacing the basis by a Euclidean-orthonormal one.

\subsection{Cumulative spectral balance with exact steps}
\label{app:mulora_exact_balance}

This subsection proves Theorem~\ref{thm:cumulative_balance}.
For clarity, the quantities used below are
\begin{align}
	C_K&=\sum_{1\le i<j\le K}\eta_i\eta_j
	(D_i^\top D_j+D_j^\top D_i),\\
	a_K&=\sum_{k=1}^K\eta_k^2+\frac{\operatorname{tr}(C_K)}{r},\qquad
	E_K=C_K-\frac{\operatorname{tr}(C_K)}{r}I_r.
\end{align}
The assumptions are $B_0=0$, $\eta_k>0$, $D_k^\top D_k=I_r$,
$a_K>0$, and $\|E_K\|_2\le\rho a_K$ for $0\le\rho<1$.

\begin{proof}
	For brevity, write $B=B_K$, $C=C_K$, $a=a_K$, and $E=E_K$.
	
	\paragraph{Step 1: Expand the cumulative Gram matrix.}
	Since $B=-\sum_{k=1}^K\eta_kD_k$ and $D_k^\top D_k=I_r$,
	\begin{align}
		B^\top B
		&=\sum_{k=1}^K\eta_k^2D_k^\top D_k
		+\sum_{1\le i<j\le K}\eta_i\eta_j
		(D_i^\top D_j+D_j^\top D_i) \notag\\
		&=\left(\sum_{k=1}^K\eta_k^2\right)I_r+C
		=aI_r+E.
		\label{eq:app_mulora_exact_gram}
	\end{align}
	By definition $E$ is symmetric and $\operatorname{tr}(E)=0$.
	Taking traces gives
	\begin{equation}
		\|B\|_F^2=\operatorname{tr}(B^\top B)=ra.
		\label{eq:app_mulora_exact_trace}
	\end{equation}
	
	\paragraph{Step 2: Bound singular values and the condition number.}
	Let $e_1\ge\cdots\ge e_r$ be the eigenvalues of $E$.
	Eq.~\eqref{eq:app_mulora_exact_gram} implies
	$\sigma_j(B)^2=a+e_j$.
	Because $|e_j|\le\|E\|_2\le\rho a$,
	\begin{equation}
		a(1-\rho)\le\sigma_j(B)^2\le a(1+\rho),\qquad j=1,\ldots,r.
	\end{equation}
	The lower bound is strictly positive because $a>0$ and $\rho<1$.
	Thus $B$ has full column rank, and taking square roots proves the
	singular-value bounds. In particular,
	\begin{equation}
		\kappa(B)
		=\sqrt{\frac{a+\lambda_{\max}(E)}{a+\lambda_{\min}(E)}}
		\le\sqrt{\frac{1+\rho}{1-\rho}}.
	\end{equation}
	
	\paragraph{Step 3: Bound the stable rank.}
	Using \eqref{eq:app_mulora_exact_trace} and
	$\|B\|_2^2=a+\lambda_{\max}(E)$ yields
	\begin{equation}
		\operatorname{sr}(B)
		=\frac{ra}{a+\lambda_{\max}(E)}
		\ge\frac{r}{1+\rho}.
	\end{equation}
	
	\paragraph{Step 4: Bound the entropy-based effective rank.}
	Since $E$ is symmetric and traceless,
	\begin{align}
		\operatorname{tr}\bigl((B^\top B)^2\bigr)
		&=\operatorname{tr}\bigl((aI_r+E)^2\bigr) \notag\\
		&=ra^2+2a\operatorname{tr}(E)+\operatorname{tr}(E^2)
		=ra^2+\|E\|_F^2.
	\end{align}
	Lemma~\ref{lem:app_mulora_entropy} therefore gives
	\begin{equation}
		\operatorname{erank}(B)
		\ge\frac{(ra)^2}{ra^2+\|E\|_F^2}
		=\frac{r}{1+\|E\|_F^2/(ra^2)}.
	\end{equation}
	Finally,
	$\|E\|_F^2\le r\|E\|_2^2\le r\rho^2a^2$, so
	\begin{equation}
		\operatorname{erank}(B)\ge\frac{r}{1+\rho^2}.
	\end{equation}
	This argument bounds the entropy effective rank through energy
	participation; it does not equate the two definitions.
	
	\paragraph{Step 5: Transfer the guarantees to the weight update.}
	If $AA^\top=I_r$, Lemma~\ref{lem:app_mulora_transfer} implies
	\begin{equation}
		\kappa_+(BA)=\kappa(B),\qquad
		\operatorname{sr}(BA)=\operatorname{sr}(B),\qquad
		\operatorname{erank}(BA)=\operatorname{erank}(B).
	\end{equation}
	Thus all stated bounds transfer without spectral distortion.
\end{proof}

\subsection{Cumulative spectral balance with approximate steps}
\label{app:mulora_robust_balance}

This subsection proves Corollary~\ref{cor:approx_cumulative_balance}.
Here $\|D_k^\top D_k-I_r\|_2\le\varepsilon_k$ and
$\delta_K=\sum_{k=1}^K\eta_k^2\varepsilon_k$.
The definitions of $C_K,a_K,E_K$ are unchanged and use the actual steps.
We assume $a_K>0$ and $\|E_K\|_2\le\rho a_K$, $0\le\rho<1$;
the condition-number and rank bounds additionally require
$\delta_K<a_K(1-\rho)$.

\begin{proof}
	Write $B=B_K$, $C=C_K$, $a=a_K$, $E=E_K$, $\delta=\delta_K$,
	and, when the strict positivity condition holds, $\tau=\delta/a$.
	
	\paragraph{Step 1: Isolate the accumulated orthogonalization error.}
	Define symmetric matrices
	\begin{equation}
		F_k=D_k^\top D_k-I_r,\qquad
		F=\sum_{k=1}^K\eta_k^2F_k.
	\end{equation}
	The assumed stepwise error bounds imply
	\begin{equation}
		\|F\|_2\le\sum_{k=1}^K\eta_k^2\|F_k\|_2
		\le\sum_{k=1}^K\eta_k^2\varepsilon_k=\delta.
		\label{eq:app_mulora_error_bound}
	\end{equation}
	Expanding the Gram matrix as before now gives
	\begin{equation}
		B^\top B=aI_r+E+F.
		\label{eq:app_mulora_robust_gram}
	\end{equation}
	The cross-step matrices $C,E$ and scalar $a$ are defined using the
	actual, possibly nonorthogonal $D_k$.
	
	\paragraph{Step 2: Bound the spectrum and establish full rank.}
	For every unit vector $x\in\mathbb R^r$, symmetry and
	\eqref{eq:app_mulora_error_bound} imply
	\begin{equation}
		a-\rho a-\delta
		\le x^\top B^\top Bx
		\le a+\rho a+\delta.
	\end{equation}
	Hence every eigenvalue of $B^\top B$ satisfies
	\begin{equation}
		a(1-\rho)-\delta\le\sigma_j(B)^2
		\le a(1+\rho)+\delta,\qquad j=1,\ldots,r.
		\label{eq:app_mulora_robust_singular}
	\end{equation}
	This bound remains valid even if its lower endpoint is nonpositive.
	For the remaining claims, assume $\delta<a(1-\rho)$.
	Then $\rho+\tau<1$, and all singular values are positive. Dividing
	the extreme bounds in \eqref{eq:app_mulora_robust_singular} gives
	\begin{equation}
		\kappa(B)\le
		\sqrt{\frac{1+\rho+\tau}{1-\rho-\tau}}.
	\end{equation}
	
	\paragraph{Step 3: Bound the stable rank.}
	Because $\operatorname{tr}(E)=0$ and
	$|\operatorname{tr}(F)|\le r\|F\|_2\le r\delta$,
	\begin{equation}
		\|B\|_F^2=ra+\operatorname{tr}(F)
		\ge r(a-\delta)=ra(1-\tau).
	\end{equation}
	Combining this with
	$\|B\|_2^2\le a(1+\rho)+\delta=a(1+\rho+\tau)$ yields
	\begin{equation}
		\operatorname{sr}(B)\ge\frac{r(1-\tau)}{1+\rho+\tau}.
	\end{equation}
	
	\paragraph{Step 4: Center the Gram matrix at its actual mean.}
	In the approximate case, $a$ need not equal $\|B\|_F^2/r$.
	Define
	\begin{equation}
		f=\frac{\operatorname{tr}(F)}{r},\qquad
		\mu=a+f=\frac{\|B\|_F^2}{r},\qquad
		T=E+F-fI_r.
	\end{equation}
	Then
	\begin{equation}
		B^\top B=\mu I_r+T,\qquad
		\operatorname{tr}(T)=0,\qquad
		\mu\ge a-\delta=a(1-\tau)>0.
		\label{eq:app_mulora_centered_gram}
	\end{equation}
	To bound $T$, use the Frobenius norm. Direct expansion gives
	\begin{equation}
		\|F-fI_r\|_F^2
		=\|F\|_F^2-\frac{\bigl(\operatorname{tr}(F)\bigr)^2}{r}
		\le\|F\|_F^2\le r\delta^2.
		\label{eq:app_mulora_centered_error}
	\end{equation}
	Thus, by the triangle inequality,
	\begin{align}
		\|T\|_F
		&\le\|E\|_F+\|F-fI_r\|_F \notag\\
		&\le\sqrt r\,\rho a+\sqrt r\,\delta
		=\sqrt r\,a(\rho+\tau).
		\label{eq:app_mulora_centered_anisotropy}
	\end{align}
	The centering estimate is made in the Frobenius norm; no contraction
	of the spectral norm under trace subtraction is required.
	
	\paragraph{Step 5: Bound the entropy-based effective rank.}
	Using \eqref{eq:app_mulora_centered_gram}, symmetry and tracelessness
	give
	\begin{equation}
		\operatorname{tr}\bigl((B^\top B)^2\bigr)=r\mu^2+\|T\|_F^2.
	\end{equation}
	Lemma~\ref{lem:app_mulora_entropy} and
	\eqref{eq:app_mulora_centered_anisotropy} therefore imply
	\begin{align}
		\operatorname{erank}(B)
		&\ge\frac{(r\mu)^2}{r\mu^2+\|T\|_F^2}
		=\frac{r}{1+\|T\|_F^2/(r\mu^2)} \notag\\
		&\ge\frac{r}{1+\left(\dfrac{\rho+\tau}{1-\tau}\right)^2}.
	\end{align}
	The final inequality follows from
	$\mu\ge a(1-\tau)>0$ and
	$\|T\|_F^2\le ra^2(\rho+\tau)^2$.
	
	\paragraph{Step 6: Transfer the bounds to the weight update.}
	Apply Lemma~\ref{lem:app_mulora_transfer} to $B$ and $A$.
	Since $B$ has full column rank under the strict positivity condition,
	the condition-number, stable-rank, and entropy-effective-rank bounds
	hold unchanged for $BA$, using $\kappa_+(BA)$ for its nonzero
	condition number.
\end{proof}

\section{Approximate Computation and Scope of the Guarantees}
\label{app:mulora_computation}

\subsection{Interpreting historical response}
\label{app:mulora_history}
RRS uses $M^t=(X_t)^\top X_t$ and
$H_{t-1}=\sum_{i<t}M^i+\epsilon I$, with $\epsilon>0$.
These unnormalized statistics weight observations according to their
sample and token counts. Replacing them by per-task averages changes
the relative weighting and is a different design choice.
Because the backbone evolves across tasks, stored historical statistics
summarize features observed at their respective collection stages.
They serve as a surrogate for historical response, rather than an exact
recomputation of all past features under the current backbone.
Positive definiteness of $H_{t-1}$ ensures that the spectral selection
problem remains well defined; it does not imply that this surrogate
exactly measures historical loss or forgetting.
The method retains aggregate second-moment matrices, even though it
does not replay past examples or individual historical features.

\subsection{A computable characterization of an approximate RRS subspace}
\label{app:mulora_approx_rrs}
Let $\widetilde M=L^{-1}ML^{-\top}\succeq0$ and let
$\widehat U\in\mathbb R^{m\times r}$ have orthonormal columns, as
returned by an approximate eigensolver. Define
\begin{equation}
	\widehat{\mathcal S}=\operatorname{range}(L^{-\top}\widehat U),\qquad
	T_r=\widehat U^\top\widetilde M\widehat U,\qquad
	\theta_r=\lambda_{\min}(T_r).
\end{equation}
For $v=L^{-\top}\widehat Uc$ with $c\ne0$, the same change of
coordinates as in the exact proof yields
\begin{equation}
	\frac{v^\top Mv}{v^\top Hv}=\frac{c^\top T_rc}{c^\top c}.
\end{equation}
Consequently, the worst-direction relative response of the computed
subspace is exactly $\theta_r$. Euclidean QR of
$L^{-\top}\widehat U$ preserves this value by preserving the subspace.
If $\lambda_r$ is the $r$-th largest eigenvalue of $\widetilde M$,
max--min optimality gives $0\le\theta_r\le\lambda_r$.
This characterizes the approximate subspace without assuming exact RSVD
optimality.

A sufficient approximation certificate is available from
$\widehat M=\widehat U T_r\widehat U^\top$.
If $\|\widetilde M-\widehat M\|_2\le\zeta$, then
\begin{equation}
	\max\{0,\lambda_r-\zeta\}\le\theta_r\le\lambda_r.
	\label{eq:app_mulora_rrs_certificate}
\end{equation}
Indeed, $\widehat M$ is positive semidefinite, its $r$-th largest
eigenvalue is $\theta_r$ (including zeros), and Weyl's eigenvalue
inequality bounds its difference from $\lambda_r$ by $\zeta$.
This certificate is sufficient and may be conservative; no numerical
value of $\zeta$ is assumed here. A small invariant-subspace residual
alone does not establish that the computed subspace is the leading one.

\subsection{Interpretation and scope of the cumulative guarantees}
\label{app:mulora_scope}

\paragraph{Why a cross-step condition is necessary.}
Individually orthogonal steps can cancel along selected directions.
For example, take $n=r=2$, $K=2$, $\eta_1=\eta_2=1$, and
\begin{equation}
	D_1=\begin{pmatrix}1&0\\0&1\end{pmatrix},\qquad
	D_2=\begin{pmatrix}1&0\\0&-1\end{pmatrix}.
\end{equation}
Both satisfy $D_k^\top D_k=I_2$, but
\begin{equation}
	B_2=-(D_1+D_2)=\begin{pmatrix}-2&0\\0&0\end{pmatrix}
\end{equation}
is rank deficient. Here $a_2=2$ and
$E_2=\operatorname{diag}(2,-2)$, so $\|E_2\|_2/a_2=1$.
The strict anisotropy assumption correctly excludes this example.
The theorem therefore provides a conditional spectral guarantee,
not an unconditional consequence of stepwise orthogonalization.

\paragraph{Exact and approximate orthogonalization.}
If $\tau_K=0$, then $\delta_K=0$ and the accumulated error matrix
$F$ vanishes. The corollary then recovers the theorem's bounds in
terms of $\rho$; retaining the exact Frobenius norm of $E_K$ also
recovers its sharper effective-rank bound. A finite polynomial
orthogonalization does not by itself guarantee
$\rho+\tau_K<1$. Similarly, a rank-deficient step necessarily has
$\|D_k^\top D_k-I_r\|_2\ge1$. The strict cumulative condition must
be checked or justified for the actual sequence of updates.

\paragraph{Within-task versus across-task accumulation.}
The analysis fixes a single input basis $A$ and studies
$B_KA=-\sum_{k=1}^K\eta_kD_kA$ within that task.
It does not directly apply to the across-task sum
$\sum_{t=1}^T B_tA_t$ when the bases $A_t$ vary.
The results characterize update geometry; they do not independently
establish a bound on task loss, classification error, or forgetting.

\subsection{Checking the cumulative conditions without pairwise step storage}
\label{app:mulora_diagnostics}
The cumulative conditions can be evaluated from the actual factor steps.
Set $D_k=-\Delta B_k/\eta_k$ for an additive update
$\Delta B_k=B_k-B_{k-1}$ with $B_0=0$ and $\eta_k>0$.
All step scaling must be included consistently; if the implementation
contains additional terms, they cannot silently be omitted.
Accumulate
\begin{equation}
	S_K=\sum_{k=1}^K\eta_k^2D_k^\top D_k,\quad
	s_K=\sum_{k=1}^K\eta_k^2,\quad
	\delta_K=\sum_{k=1}^K\eta_k^2\|D_k^\top D_k-I_r\|_2.
\end{equation}
The identity
\begin{equation}
	C_K=B_K^\top B_K-S_K
\end{equation}
recovers exactly the cross-step matrix in the theorem without retaining
all pairs of steps. Compute $a_K=s_K+\operatorname{tr}(C_K)/r$
and $E_K=C_K-\operatorname{tr}(C_K)I_r/r$.
For $a_K>0$, define
\begin{equation}
	\widehat\rho_K=\frac{\|E_K\|_2}{a_K},\qquad
	\widehat\tau_K=\frac{\delta_K}{a_K}.
\end{equation}
The strict certificate holds whenever
$\widehat\rho_K+\widehat\tau_K<1$; the corollary can then be
evaluated using these quantities. Cases with $a_K\le0$ or a failed
strict inequality must be counted as uncertified rather than discarded.
Failure of this sufficient certificate does not imply poor accuracy or
rank deficiency. These expressions specify a diagnostic procedure;
the accuracy comparisons below do not themselves establish that the
certificate holds along training trajectories.

\section{Mechanism Comparisons on 20S-ImageNet-A}
\label{app:mulora_mechanisms}

\subsection{Definitions of the compared variants}
\label{app:mulora_variants}
The comparison separates choices of input subspace from the location
at which spectral control is applied. A fixed basis is held constant
within each task and reconstructed for the next task.

\paragraph{Trainable $A,B$ + Muon.}
Both LoRA factors are optimized using Muon. The induced weight step is
$\Delta B A+B\Delta A+\Delta B\Delta A$, so controlling the spectra
of the two factor steps separately does not ensure spectral control
of their combined weight step. This variant also has more trainable
adapter parameters: $r(m+n)$ rather than $nr$ per adapted matrix.

\paragraph{Fixed random orthonormal $A$ + Muon.}
A random row-orthonormal basis satisfies $AA^\top=I_r$ and only $B$
is trained. This retains exact factor-to-weight spectral transfer
without data-dependent subspace selection.

\paragraph{Fixed current-task PCA $A$ + Muon.}
Let $U_r$ contain the leading $r$ eigenvectors of $M^t$ and set
$A=U_r^\top$. This selects the strongest current-task second-moment
directions without historical information. Here ``PCA'' denotes
uncentered second-moment spectral selection, consistent with $M^t$;
centering the features would define a different variant.

\paragraph{Fixed historical-isolation $A$ + Muon.}
Let $U_H$ be an orthonormal basis for the retained historical feature
subspace, and let $Q_H$ span its orthogonal complement. If
$V_r$ contains the leading $r$ eigenvectors of
$Q_H^\top M^tQ_H$, set $A=(Q_HV_r)^\top$.
This maximizes current-task response subject to hard historical
isolation and requires $\dim(\operatorname{range}(Q_H))\ge r$.
The retained historical dimension determines the feasible space.

\paragraph{EBLoRA.}
EBLoRA constrains the adapter's nonzero singular values through
orthonormal factors and a shared scalar magnitude. The distinction
is the object of spectral control: an adapter-spectrum constraint
versus MuLoRA's control of individual momentum steps.

\paragraph{MuLoRA (RRS + SSB).}
RRS selects directions by current-task response relative to historical
response, followed by Euclidean orthonormalization. SSB updates only
$B$ within this fixed basis. It does not require the accumulated
adapter to have equal singular values, and thus retains spectral
freedom across optimization steps.

\subsection{Results and interpretation}
\label{app:mulora_mechanism_results}
\begin{table}[t]
	\centering
	\caption{
		Comparison of subspace construction and spectral control
		on ImageNet-A with 20 incremental sessions.
		$A_{\mathrm{Last}}$ and $A_{\mathrm{Avg}}$ are reported
		in \%.
	}
	\label{tab:imageneta20_mechanism}
	\small
	\setlength{\tabcolsep}{5pt}
	\renewcommand{\arraystretch}{1.1}
	\begin{tabular}{lcc}
		\toprule
		Method
		& $A_{\mathrm{Last}}\uparrow$
		& $A_{\mathrm{Avg}}\uparrow$ \\
		\midrule
		Trainable $A,B$ + Muon
		& $51.34 \pm 0.35$ & $62.46 \pm 0.33$ \\
		Fixed random orthonormal $A$ + Muon
		& $54.32 \pm 0.64$ & $64.53 \pm 0.51$ \\
		Fixed current-task PCA $A$ + Muon
		& $52.45 \pm 0.34$ & $63.56 \pm 0.67$ \\
		Fixed historical-isolation $A$ + Muon
		& $54.67 \pm 0.25$ & $64.76 \pm 0.47$ \\
		EBLoRA 
		& $52.52 \pm 1.01$ & $62.16 \pm 0.82$ \\
		\midrule
		\textbf{MuLoRA (RRS + SSB)}
		& $\mathbf{55.86} \pm 0.61$ & $\mathbf{65.61} \pm 0.64$ \\
		\bottomrule
	\end{tabular}
\end{table}

Table~\ref{tab:imageneta20_mechanism} reports the mechanism comparison.
MuLoRA attains the highest mean final and average accuracy.
Relative to the fixed random orthonormal basis, RRS improves
$A_{\mathrm{Last}}$ and $A_{\mathrm{Avg}}$ by 1.54 and 1.08
percentage points, respectively. Its gains over current-task PCA
are 3.41 and 2.05 points, suggesting that strong current-task feature
response alone is insufficient to select an effective continual
adaptation space in this setting. Relative to historical isolation,
the gains are 1.19 and 0.85 points, supporting graded relative-response
selection over strictly excluding retained historical directions.

MuLoRA exceeds jointly trainable factors with Muon by 4.52 and 3.15
points and EBLoRA by 3.34 and 3.45 points. These comparisons support
the effectiveness of the complete design. They do not establish that
fixed orthonormal parameterization is necessary, or isolate a causal
effect of spectral balancing: trainable-factor counts, optimization
dynamics, and subspace choice can differ across variants.
Together with the main-text RRS/SSB component ablation, the results
support combining history-aware subspace selection and stepwise
spectral balancing. The reported mean differences are not claims of
statistical significance; paired runs are needed to quantify their
uncertainty. Overall and average accuracy alone do not identify whether
a gain arises from improved new-task learning or reduced forgetting.

\section{Reproducibility Details and Limitations}
\label{app:mulora_reproducibility}

\subsection{Reported experimental configuration}
\label{app:mulora_configuration}
The main experimental configuration uses a ViT-B/16 backbone pretrained
on ImageNet-21K, adapter rank $r=32$, a cosine classifier, and a
single NVIDIA RTX 4090 GPU. Each task is trained for 20 epochs with
batch size 48 and a cosine learning-rate schedule. The reported
initial learning rate is $10^{-4}$ on ImageNet-R and $2\times10^{-4}$
on the other benchmarks, including ImageNet-A. The main comparison
reports averages over three random seeds.

\subsection{Memory and computational accounting}
\label{app:mulora_cost}
For an adapted matrix of size $n\times m$, the trainable factor
$B\in\mathbb R^{n\times r}$ has $nr$ parameters, while the fixed
basis requires $mr$ stored entries during a task. A dense historical
second-moment matrix requires $O(m^2)$ persistent storage per distinct
tracked input representation. Thus absence of replay does not imply
absence of historical state. Statistics may be shared only where the
corresponding input features are identical.

For $N_t$ collected feature vectors, directly forming $M^t$ costs
$O(N_tm^2)$ operations and can be accumulated in batches without
retaining all feature vectors. A dense Cholesky factorization costs
$O(m^3)$; triangular solves, approximate eigenspace extraction, and QR
contribute additional task-initialization costs. For $B\in\mathbb R^{n\times r}$,
each polynomial orthogonalization iteration based on $Y^\top Y$
costs $O(nr^2+r^3)$, with $r\le n$.
These operation counts do not establish a wall-clock advantage.
End-to-end efficiency depends on feature collection and subspace
construction as well as optimizer steps.

\subsection{Scope of empirical and theoretical conclusions}
\label{app:mulora_limitations}
The spectral-transfer identity is exact for a row-orthonormal basis,
whereas the implemented polynomial orthogonalization and eigenspace
selection are approximate. Cumulative guarantees depend on measured
or otherwise justified cross-step conditions, and apply within a
task with a fixed basis. They are not bounds on cross-task forgetting,
classification error, or generalization.

The mechanism comparison covers one 20-session benchmark setting.
Broader main-text accuracy results do not replace matched mechanism
comparisons across additional settings. A norm-matched momentum
control would help distinguish update-scale effects from spectral
reshaping. Separate old-class and new-class measurements, with the
prediction label space specified, would further distinguish retention
from adaptation. Spectral plots should identify the update matrix,
layer aggregation, normalization, and numerical rank threshold;
$\kappa_+$ should not be interpreted without rank-sensitive statistics.

\stopcontents[appendices]

\end{document}